\documentclass{article}

\usepackage[round, authoryear]{natbib}
    \PassOptionsToPackage{numbers, compress}{natbib}



\usepackage[final]{neurips_2022}


\usepackage[utf8]{inputenc} 
\usepackage[T1]{fontenc}    
\usepackage{hyperref}       
\usepackage{url}            
\usepackage{booktabs}       
\usepackage{amsfonts}       
\usepackage{nicefrac}       
\usepackage{microtype}      
\usepackage{xcolor}         

\usepackage{amsmath}
\usepackage{amssymb}
\usepackage{amsthm}

\usepackage{enumerate}
\usepackage{color}

\usepackage{graphicx}
\usepackage{bm}
\usepackage[toc,page]{appendix}

\usepackage{subcaption}
\usepackage{wrapfig}
\usepackage{adjustbox}
\usepackage{relsize}

\graphicspath{{plots/}}

\usepackage{algorithm}
\usepackage[noend]{algpseudocode}

\algnewcommand\algorithmicinput{\textbf{Initialize:}}
\algnewcommand\INPUT{\item[\algorithmicinput]}

\usepackage{thm-restate}


\newtheorem{lemma}{Lemma}

\newenvironment{proofsketch}{%
  \proof}{\endproof}


\usepackage{amsmath,amsfonts,bm}

\DeclareMathOperator*{\E}{\mathbb{E}}

\newcommand{\cS}{\mathcal{S}}

\newcommand{\cA}{\mathcal{A}}

\title{Aggressive Q-Learning with Ensembles:\\Achieving Both High Sample Efficiency and High Asymptotic Performance}

%

\author{
Yanqiu Wu$^1$ \And Xinyue Chen$^{2, 3}$ \AND
Che Wang$^1$ \And Yiming Zhang$^1$ 
\And Keith Ross$^{1, 2}$\thanks{Correspondence to: Keith Ross <keithwross@nyu.edu>.}\\
\AND
$^1$ \text{\normalfont 
New York University}\\ 
$^2$
New York University Shanghai\\
$^3$ University of California, Berkeley
}


\begin{document}

\maketitle

\begin{abstract}
Recent advances in model-free deep reinforcement learning (DRL) show that simple model-free methods can be highly effective in challenging high-dimensional continuous control tasks. In particular, Truncated Quantile Critics (TQC) achieves state-of-the-art asymptotic training performance on the MuJoCo benchmark with a distributional representation of critics; and Randomized Ensemble Double Q-Learning (REDQ) achieves high sample efficiency that is competitive with state-of-the-art model-based methods using a high update-to-data ratio and target randomization. In this paper, we propose a novel model-free algorithm, Aggressive Q-Learning with Ensembles (AQE), which improves the sample-efficiency performance of REDQ and the asymptotic performance of TQC, thereby providing overall state-of-the-art performance during all stages of training. Moreover, AQE is very simple, requiring neither distributional representation of critics nor target randomization. The effectiveness of AQE is further supported by our extensive experiments, ablations, and theoretical results.
\end{abstract}

\section{Introduction}
Off-policy Deep Reinforcement Learning algorithms aim to improve sample efficiency by reusing past experience. A number of off-policy Deep RL algorithms have been proposed for control tasks with continuous state and action spaces, including Deep Deterministic Policy Gradient (DDPG), Twin Delayed DDPG (TD3) and Soft Actor Critic (SAC) \citep{ddpg-LillicrapHPHETS15,td3-fujimoto18a,sac-v80-haarnoja18b, sac-adapt-haarnoja2018sacapps}.
TD3 introduced clipped double-Q learning,
and was shown to be significantly more sample efficient than popular on-policy methods for a wide range of MuJoCo benchmarks. Soft Actor Critic (SAC) has similar off-policy structures with clipped double-Q learning,  but it also employs maximum entropy reinforcement learning. SAC was shown to provide excellent sample efficiency and asymptotic performance in a wide-range of MuJoCo environments, including the high-dimensional Humanoid environment for which both DDPG and TD3 perform poorly. 

More recently, \citet{tqc-v119-kuznetsov20a} proposed Truncated Quantile Critics (TQC), a model-free algorithm which includes distributional representations of critics, truncation of critics prediction, and ensembling of multiple critics.  Instead of the usual modeling of the Q-function of the expectation of return, TQC approximates the distribution of the return random variable conditioned on the state and action. By dropping several of the top-most ``atoms'' and varying the number of dropped atoms of the return distribution approximation, TQC can control the over-estimation bias. TQC's asymptotic performance (that is after a long period of training) was shown to be better than that of SAC on the continuous control MuJoCo benchmark suite, including a 25\% improvement on the most challenging Humanoid environment. However, TQC is not sample efficient in that it generally requires a large number of samples to reach even moderate performance levels. 
 
\citet{redq-chen2021randomized} proposed Randomized Ensembled Double Q-learning(REDQ),  a model-free algorithm which includes a high Update-To-Data (UTD) ratio, an ensemble of Q functions, and in-target minimization across a random subset of Q functions from the ensemble. Using a UTD ratio much larger than one, meaning that several gradient steps are taken for each environment interaction, improves sample efficiency, while the ensemble and in-target minimization allows the algorithm to maintain stable and near-uniform bias under the high UTD ratio. 
The algorithm was shown to attain much better performance than SAC  at the early stage of training, and to match or improve 
the sample-efficiency of the state-of-the-art model-based algorithms for the MuJoCo benchmarks. However, although REDQ is highly sample efficient for early-stage training, its asymptotic performance is significantly below that of TQC.

Is it possible to design a simple, streamlined model-free algorithm which can achieve REDQ's high sample efficiency in early-stage training and also achieve TQC's high asymptotic performance in late stage training? In this paper, we achieve this goal with a new model-free algorithm, Aggressive Q-Learning with Ensembles (AQE). Like TQC and REDQ, AQE uses an ensemble of Q-functions, and like REDQ it uses a UTD ratio $>1$. However AQE is very simple, requiring neither distributional representation of critics as in TQC nor target randomization and double-Q learning as in REDQ. 
AQE controls overestimation bias and the standard deviation of the bias by varying the number of ensemble members $N$ and the number of ensembles $K \leq N$ that are kept when calculating the targets. 

Through carefully designed experiments, we provide a detailed analysis of AQE. We perform extensive and comprehensive experiments for both MuJoCo and DeepMind Control Suite (DMC) environments. We first show that for the five most challenging MuJoCo benchmark, AQE provides state-of-the-art performance, surpassing the performance of SAC, REDQ, and TQC at {\em all stages of training.} When averaged across the five MuJoCo environments, AQE's early stage performance is 2.9 times better than SAC, 1.6 times better than TQC and 1.1 times better than REDQ.
AQE's asymptotic performance is 26\%, 22\%, and 6\% higher than SAC, REDQ, and TQC, respectively. Then we provide additional experimental results for the nine most challenging DeepMind Control Suite (DMC) environments, which TQC and REDQ did not consider. We show that AQE also provides state-of-the-art performance at both early stage and late-stage of training. When averaged over nine environments, AQE's early stage performance is 13.71 times better than SAC, 7.59 times better than TQC and 1.02 times better than REDQ. 
AQE's asymptotic performance is 37\% better than SAC, 3\% better than REDQ, and 8\% better than TQC. We also perform an ablation study, and show that AQE is robust to choices of hyperparameters: AQE can work well with small ensembles consisting of 10-20 ensemble members, and performance does not vary significantly with small changes in the keep parameter $K$. We show that that AQE performs better than several variations, including using the median of all ensemble members and removing the most extreme minimum and maximum outlier in the targets. In order to improve computational time, we also consider different multi-head architectures for the ensemble of critics: consistent with the supervised convolutional network literature, we find that a two-head architecture not only reduces computational time but can actually improve performance for some environments. Additionally, we show that AQE continues to out-perform SAC and TQC even when these algorithms are made aggressive with a UTD $\gg 1$. 


To ensure a fair comparison and to obtain reliable and reproducible results \citep{Henderson2018DeepRL, reproduce-abs-1708-04133, duan16-pmlr-v48}, we provide open source code\footnote{https://github.com/AutumnWu/Aggressive-Q-Learning-with-Ensembles}. For all algorithmic comparisons, we use the the authors' code.

\section{Additional Related Work}


Overestimation bias due to in target maximization in Q-learning can significantly slow down learning \citep{thrun1993issues}. For tabular Q-learning, \cite{dqn-Hasselt10} introduced Double Q-Learning, and showed that it removes the overestimation basis and in general leads to an under-estimation bias. \cite{dqn-drl} showed that adding Double Q-learning to deep-Q networks 
can have a similar effect, leading to a major performance boost
for the Atari games benchmark. As stated in the Introduction, for continuous-action spaces, TD3 and SAC address the overestimation bias using clipped-double Q-learning, which brings significant performance improvements \citep{td3-fujimoto18a,sac-v80-haarnoja18b, sac-adapt-haarnoja2018sacapps}.

As mentioned in the Introduction, \citet{tqc-v119-kuznetsov20a} 
control the over-estimation bias by estimating the distribution of the return random variable, and then by dropping several of the top-most ``atoms'' from the estimated distribution. 
The distribution estimate is based on a methodology developed in \cite{ pmlr-v70-bellemare17a, DBLP:conf/icml/DabneyOSM18, QR-DQN-aaai/DabneyRBM18}, which employs an asymmetric Huber loss function to minimize the Wasserstein distance between the neural network output distribution and the target distribution. 
In this paper, in order to counter over-estimation bias, we also drop the top-most estimates, although we do so solely with an ensemble of Q-function mean estimators rather than with an ensemble of the more complex distributional models employed in \citep{tqc-v119-kuznetsov20a}.  

It is well-known that using ensembles can improve the performance of DRL algorithms \citep{Fau-s11063-013-9334-5, NIPS2016_8d8818c8, sunrise-icml/LeeLSA21}. For Q-learning based methods, \citet{averagedqn-pmlr-v70-anschel17a} use the average of multiple Q estimates to reduce variance. \citet{maxmin-LanPFW20} introduced Maxmin Q-learning, which uses the minimum of all the Q-estimates rather than the average. 
\citet{rem-agarwal2020optimistic} use Random Ensemble Mixture (REM), which employs a random convex combination of multiple Q estimates.  

Model-based methods often attain high sample efficiency by using a high UTD ratio. In particular, Model-Based Policy Optimization (MBPO) \citep{mbpo-10.5555/3454287.3455409} uses a large UTD ratio of 20-40. Compared to Soft-Actor-Critic (SAC), which is model-free and uses a UTD of 1, MBPO achieves much higher sample efficiency in the OpenAI MuJoCo benchmark \citep{mujoco-6386109, brockman2016openai}. REDQ \citep{redq-chen2021randomized}, a model-free algorithm, also successfully employs a high UTD ratio to achieve high sample efficiency.

\section{Algorithm}
     
We propose Aggressive Q-learning with Ensembles (AQE), a simple  model-free algorithm which provides state-of-the-art performance for the MuJoCo benchmark for both early and late stage of training. The pseudocode can be found in Algorithm \ref{alg:aqe}. As is the case with most off-policy continuous-control algorithms, AQE has a single actor (policy network) and multiple critics (Q-function networks), and employs Polyak averaging of the target parameters to enhance stability. Building on this algorithmic base, it also employs an update-to-data ratio $G > 1$, an ensemble of $N \geq 3$ Q-functions (rather than just two as in TD3 and SAC), and targets that average all the Q-functions excluding the Q-functions with the highest $N-K$ values. In the Appendix, we demonstrate theoretically in the tabular case of the algorithm that we can control over-estimation through adjusting $K$ and $N$. More concretely, we can bring the bias term from above zero (i.e. overestimation) to under zero (i.e. underestimation) by decreasing $K$ and/or increasing $N$. 

For comparison, REDQ employs two randomly chosen ensemble members when calculating the target, the bias does not depend on the number of ensemble models $N$ \citep{redq-chen2021randomized}. 
As discussed in Appendix, with $M = 2$ fixed, increasing the size of ensemble $N$ with the multi-head architecture does not necessarily improve the performance of REDQ. Unlike REDQ, AQE can control the bias term through both the number of ensemble models used in the average calculation $K$ and the total number of ensembles $N$, allowing for more flexibility. One other drawback for REDQ is that it ignores the estimates of all other ensemble estimates except for the minimal one in the randomly chosen set, which diminishes the power of the multiple ensemble sets. In contrast, AQE utilizes most of the ensemble models when calculating the target.The resulting algorithm is not only simple and streamlined, but also provides state-of-the art performance. For exploration, it uses entropy maximization as in SAC, although it could easily incorporate alternative exploration schemes.

AQE has three key hyperparameters, $G$, $N$, and $K$. If we set $N=2$, $K=1$ and $G= 1$, AQE is simply the underlying off-policy algorithm such as SAC.  When $N > 2$, $K=1$ and $G = 1$, then AQE becomes similar to, but not equivalent to, Maxmin Q-learning \citep{maxmin-LanPFW20}.

\begin{algorithm*}[htb]
    \caption{Aggressive Q-Learning with Ensembles}
	\label{alg:aqe}
\begin{algorithmic}[1]
		\INPUT Initial policy parameters $\theta$, $\textcolor{red}{N}$ Q-function parameters $\phi_i, i = 1, $\dots$, N$, empty replay buffer $\mathcal{D}$. Set target parameters $\phi_{\text{targ},i} \gets \phi_i$ for $i = 1, 2, $\dots$, N.$
		\Repeat
	    \State Take one action $a_t \sim \pi_\theta(\cdot|s_t)$. Observe reward $r_t$, new state $s_{t+1}$. 
	    \State Add data to replay buffer: $\mathcal{D} \gets \mathcal{D} \cup \{(s_t,a_t,r_t,s_{t+1})\}$
		\For {$\textcolor{red}{G}$ updates}
		\State Randomly sample a mini-batch $B = \{ (s,a,r,s') \}$ from $\mathcal{D}$.
		\For {each $(s,a,r,s') \in B$}
		\State Sample $\tilde{a}' \sim \pi_\theta(\cdot|s')$.
		\State Determine the $\textcolor{red}{K}$ indices from $i = 1,\dots, N$ that minimize $Q_{\text{target},i}(s',\tilde{a}')$.
		\State Compute the Q target $y$: \\
		\hskip1.5em $y(s,a) = r + \gamma \bigg( \frac{1}{K} \displaystyle\sum_{i \in K } Q_{\phi_{\text{targ},i}}(s',\tilde{a}') - \alpha \log \pi_\theta(\tilde{a}'|s') \bigg)$
		\EndFor
	    \For {$i=1, \dots, N$}
        \State Update $\phi_i$ with gradient descent using \\
        
		\hskip9.5em $\nabla_{\phi_i} \frac{1}{|B|}\displaystyle\sum_{(s,a,r,s') \in B} \left( Q_{\phi_i}(s,a) - y(s,a) \right)^2$

		\State Update target networks with $\phi_{\text{targ},i} \gets \rho \phi_{\text{targ},i} + (1-\rho)\phi_i$ 
	    \EndFor
	   \EndFor
		\State Update policy parameters $\theta$ with gradient ascent using \\
		
		\hskip2.5em $\nabla_{\theta} \frac{1}{|B|}\displaystyle\sum_{s \in B} \bigg( \frac{1}{N} \sum_{i=1}^N Q_{\phi_i}(s, \tilde{a}_\theta(s)) - \alpha \log \pi_\theta(\tilde{a}_\theta(s)|s) \bigg)$ \;\;\;\;\; $\tilde{a}_\theta(s) \sim \pi_\theta(\cdot|s)$
		
		\Until {Convergence}
\end{algorithmic}
\end{algorithm*}

AQE uses an ensemble of Q networks  (as does REDQ and TQC). Employing multiple networks, one for each Q-function output, can be expensive in terms of computation and memory. In order to reduce the computation and memory requirements, we combine network ensemble with multi-head architectures to generate multiple Q-function outputs. We consider $N$ separate Q networks each with $h$ heads, providing a total of $h\cdot N$ estimates. The $h$ heads from one network share all of the layers except the final fully-connected layer. In practice, we found $h$ = 2 heads works well for AQE, consistent with work in ensembles of convolutional neural networks for computer vision tasks \citep{lee-LeePCCB15}. 
When properly sharing low-level weights, multi-headed networks may not only retain the performance of full ensembles but can sometimes outperform them. In the next section, we discuss our experimental results. 

\section{Experimental Results}

We perform extensive and comprehensive experiments for two sets of popular benchmarks. First we provide experimental results for AQE, TQC, REDQ and SAC for the five most challenging MuJoCo environments, namely, Hopper, Walker2d, HalfCheetah, Ant and Humanoid. Then we provide additional experimental results for the nine most challenging DeepMind Control Suite (DMC) environments, namely, Cheetah-run, Fish-swim, Hopper-hop, Humanoid-stand, Humanoid-walk, Humanoid-run, Quadruped-walk, Quadruped-run and Walker-run. 
To make a fair comparison, the TQC, REDQ and SAC results are reproduced using the authors' open source code, and use the same network sizes and hyperparameters reported in their papers. 
In particular, for the MuJoCo environments, TQC employs 5 critic networks with 25 distributional samples for a total of 125 atoms. 
TQC drops 5 atoms per critic for Hopper, 0 atoms per critic for Half Cheetah, and 2 atoms per critic for Walker, Ant, and Humanoid. For REDQ, we also use the authors' suggested values of $N$ = 10 and $M$ = 2, where $M$ is the number of ensemble members used in the target calculation. 

The REDQ paper uses $G=20$ for the update-to-data ratio, and provides results for up to 300K environment interactions. Using such a high value for $G$ is computationally infeasible in our experimental setting, since we use 3 million environment interactions for Ant and Humanoid and over 4 million environment interactions for Humanoid-run in order to investigate asymptotic performance as well early-stage sample efficiency.
In the experiments reported here, we use a value of $G$ = 5 for both REDQ and AQE. 

For AQE, we use 10 Q-networks each with 2 heads, producing $20$ Q-values for each input. The AQE networks are the same size as those in the REDQ paper. For MuJoCo benchmark, AQE keeps 10 out of 20 values for Hopper, all 20 values for half-Cheetah, and 16 out of 20 values for Walker, Ant and Humanoid. 

\begin{figure}[h!tb]
\centering
\begin{subfigure}{0.329\textwidth}
	\centering
	\includegraphics[width=0.99\linewidth]{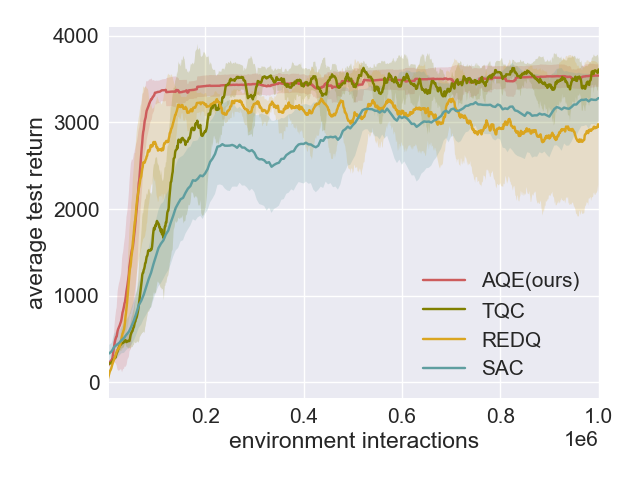}
	\caption{Hopper-v2}
	\label{fig:pri-hopper}
\end{subfigure}
\begin{subfigure}{0.329\textwidth}
	\centering
	\includegraphics[width=0.99\linewidth]{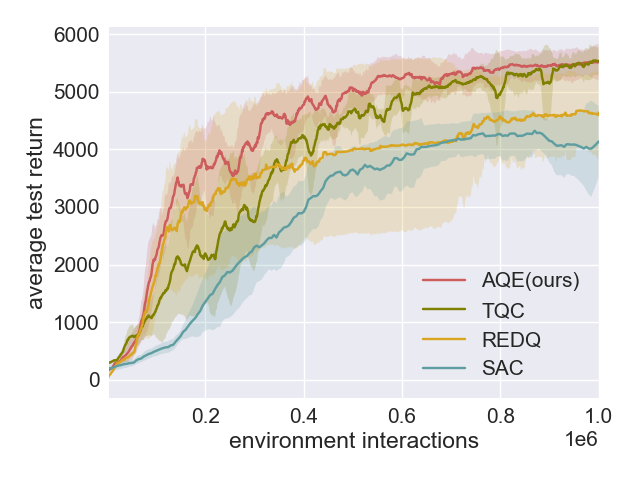}
	\caption{Walker2d-v2}
    \label{fig:pri-walker2d}
\end{subfigure}
\begin{subfigure}{0.329\textwidth}
	\centering
	\includegraphics[width=0.99\linewidth]{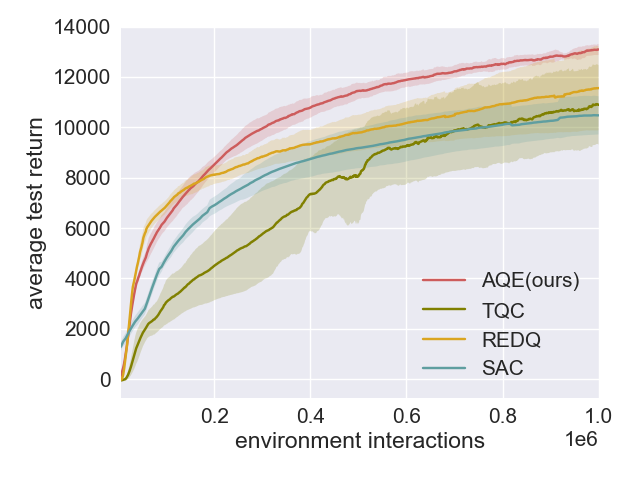}
	\caption{HalfCheetah-v2}
	\label{fig:pri-halfcheetah}
\end{subfigure}
\begin{subfigure}{0.329\textwidth}
	\centering
	\includegraphics[width=0.99\linewidth]{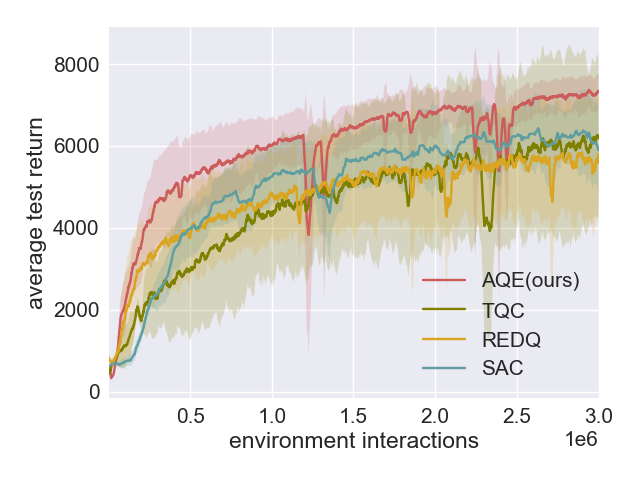}
	\caption{Ant-v2}
	\label{fig:pri-ant}
\end{subfigure}
\begin{subfigure}{0.329\textwidth}
	\centering
	\includegraphics[width=0.99\linewidth]{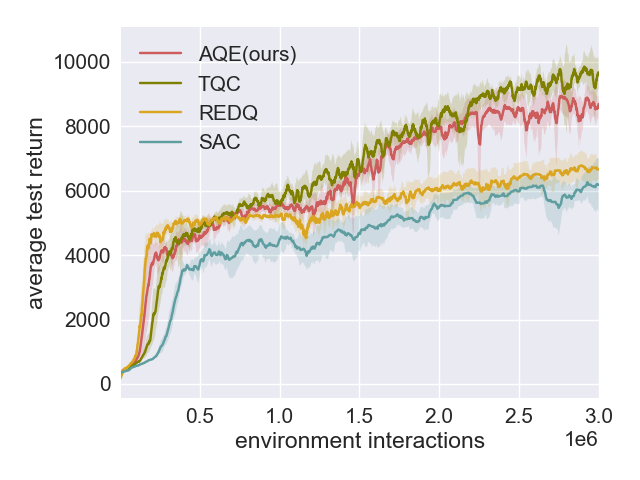}
	\caption{Humanoid-v2}
	\label{fig:pri-humanoid}
\end{subfigure}
\caption{AQE versus TQC, REDQ and SAC for MuJoCo environments. AQE is the only algorithm that beats SAC in all five environments during all stages of training, and it typically beats SAC by a wide margin.}
\label{fig:primary}
\end{figure}

Figure \ref{fig:primary} shows the training curves for AQE, TQC, REDQ, and SAC for the five MuJoCo environments. For each algorithm, we plot the average return of 5 independent trials as the solid curve, and plot the standard deviation across 5 seeds as the shaded region. For each environment, we train each algorithm for exactly the same number of environment interactions as done in the SAC paper. We use the same evaluation protocol as in the TQC paper. Specifically, after every epoch, we run ten test episodes with the current policy, record the undiscounted sum of all the rewards in each episode and take the average of the sums as the performance. A more detailed discussion on hyperparameters and implementation details is given in the Appendix. 

We see from Figure \ref{fig:primary} that AQE is the only algorithm that beats SAC in all five environments during all stages of training. Moreover, it typically beats SAC by a very wide margin. Table \ref{tab:sample-efficiency} shows that, when averaged across the five environments, AQE achieves SAC asymptotic performance  approximately 3x faster than SAC and 2x faster than REDQ and TQC.
As seen from Figure \ref{fig:primary} and Table \ref{tab:early-performance}, in the  early stages of training, AQE matches the excellent performance of REDQ in all five environments, and both algorithms are much more sample efficient than SAC and TQC.  
As seen from Figure \ref{fig:primary} and Table \ref{tab:late-performance}, in late-stage training, AQE always matches or beats all other algorithms, except for Humanoid, where TQC is about 10\% better. Table \ref{tab:late-performance} shows that, when averaged across all five environments, AQE's asymptotic performance is 26\%, 22\%, and 6\% higher than SAC, REDQ, and TQC, respectively. 

\begin{table}[htb]
    \begin{center}
    \caption{Sample efficiency comparison of SAC, TQC, REDQ and AQE. The numbers show the amount of data collected when the specified performance level is reached (roughly corresponding to 90\% of SAC's final performance). The last three columns show how many times AQE is more sample efficient than SAC, TQC and REDQ in reaching that performance level. For each task, the lowest sample use, and those within 5\% difference are highlighted.} 
    \label{tab:sample-efficiency}
    \vskip 0.1in
    \begin{tabular}{l c c c c c c c}
    Performance & SAC & TQC & REDQ & AQE & AQE/SAC & AQE/TQC & AQE/REDQ \\
    \hline
    Hopper at 3000 & 506K & 184K & 136K & \textbf{77K} & 6.57 & 2.39 & 1.77 \\
    Walker2d at 4000 & 631K & 371K & 501K & \textbf{277K} & 2.28 & 1.34 & 1.81 \\ 
    HalfCheetah at 10000 & 763K & 737K & 552K & \textbf{304K} & 2.51 & 2.42 & 1.82 \\
    Ant at 5500 & 1445K & 1759K & 1749K & \textbf{632K} & 2.29 & 2.78 & 2.77 \\
    Humanoid at 6000 & 2469K & \textbf{1043K} & 1862K & 1345K & 1.84 & 0.78 & 1.38 \\
    \hline
    Average & - & - & - & - & 3.10 & 1.94 & 1.91 \\
    \end{tabular}
    \end{center}
\end{table}

\begin{table}[ht]
    \centering
    \caption{Early-stage performance comparison of SAC, TQC, REDQ and AQE. The numbers show the performance achieved when the specific amount of data is collected. 
    On average, AQE performs 2.9 times better than SAC, 1.6 times better than TQC and 1.1 times better than REDQ. For each task, the highest score and those within 5\% difference are highlighted.}
    \vskip 0.1in
    \begin{adjustbox}{max width=\textwidth}
    \begin{tabular}{l c c c c c c c}
         Amount of data & SAC & TQC & REDQ & AQE & AQE/SAC & AQE/TQC & AQE/REDQ \\
         \hline
         Hopper at 100K & 1456 & 1807 & 2747 & \textbf{3345} & 2.30 & 1.85 & 1.22 \\
         Walker2d at 100K & 501 & 1215 & 1810 & \textbf{2150} & 4.29 & 1.77 & 1.19 \\
         HalfCheetah at 100K & 3055 & 4801 & \textbf{6876} & 6378 & 2.09 & 1.33 & 0.93 \\
         Ant at 250K & 2107 & 2344 & 3279 & \textbf{4153} & 1.97 & 1.77 & 1.27 \\
         Humanoid at 250K & 1094 & 3038 & \textbf{4535} & 3973 & 3.63 & 1.31 & 0.88 \\
         \hline
         Average at early stage & - & - & - & - & 2.86 & 1.61 & 1.10 \\
    \end{tabular}
    \end{adjustbox}
    \label{tab:early-performance}
\end{table}
\begin{table}[ht]
    \centering
    \caption{Late-stage performance comparison of SAC, TQC, REDQ and AQE. The numbers show the performance achieved when a specific amount of data is collected. The last three columns show the ratio of AQE performance compared to SAC, TQC, and REDQ performance. On average, during late-stage training, AQE performs 1.26 times better than SAC, 1.06 times better than TQC, and 1.22 times better than REDQ. For each task, the highest score and those within 5\% difference are highlighted.}
    \label{tab:late-performance}
    \vskip 0.1in
    \begin{adjustbox}{max width=\textwidth}
    \begin{tabular}{l c c c c c c c}
         Amount of data & SAC & TQC & REDQ & AQE & AQE/SAC & AQE/TQC & AQE/REDQ \\
         \hline
         Hopper at 1M & 3282 & \textbf{3612} & 2954 & \textbf{3541} & 1.08 & 0.98 & 1.20 \\
         Walker2d at 1M & 4134 & \textbf{5532} & 4637 & \textbf{5517} & 1.33 & 1.00 & 1.19 \\
         HalfCheetah at 1M & 10475 & 10887 & 11562 & \textbf{13093} & 1.25 & 1.20 & 1.13 \\
         Ant at 3M & 5903 & 6186 & 5785 & \textbf{7345} & 1.24 & 1.19 & 1.27 \\
         Humanoid at 3M & 6177 & \textbf{9593} & 6649 & 8680 & 1.41 & 0.91 & 1.31 \\
         \hline
         Average at late stage & - & - & - & - & 1.26 & 1.06 & 1.22 \\
    \end{tabular}
    \end{adjustbox}
\end{table}



\subsection{Fixed Hyperparameters across Environments}
Following the TQC paper, in Figure \ref{fig:primary} we used different drop atoms for TQC for the different MuJoCo environments.
To make the comparison fair, we also used different keep values $K$ for AQE for the different environments.
We repeat the experiment on the five MuJoCo environments, but now use the same hyperparameter values across environments for TQC (drop two atoms per network) and AQE ($K$ = 16). These choices of fixed hyperparameters appear to give the best overall performance for the two algorithms. The training curves for AQE, TQC, REDQ, and SAC and detailed early-stage and late-stage performance comparisons of all algorithms for this experiment are shown in the Appendix. 

We can see from the results that with fixed hyperparamters, the conclusions for AQE remain largely unchanged, except for Hopper, where REDQ becomes the strongest algorithm. 
When averaged across environments,  AQE still matches the high sample efficiency of REDQ during the early stages of training. Furthermore, 
on average, AQE's asymptotic performance is still 16\%, 11\% and 9\% higher than SAC, REDQ and TQC, respectively.

\subsection{DeepMind Control Suite Results}
In this section, we provide detailed experimental results for AQE, TQC, REDQ and SAC for the nine most challenging DeepMind Control Suite (DMC) environments. The TQC and REDQ papers do not consider DMC benchmark, so we employ the same hyperparameters for TQC and REDQ as for the MuJoCo environments. For TQC, we employ 5 critic networks with 25 distributional samples and drop 2 atoms per critic across environments. For REDQ, we keep using $N = 10$ and $M = 2$. To make the comparison fair, we also use the same hyperparameter values across environments for AQE. We present the learning curves in Figure \ref{fig:dmc-results}. Similar to MuJoCo benchmark, for each algorithm, we plot the average return of 5 independent trials as the solid curve, and plot the standard deviation across 5 seeds as the shaded region. We run all algorithms to 1 million environment interactions except for the most challenging environment, Humanoid-run, where we run up to 4.5 million environment interactions. We use the same evaluation protocol as for the MuJoCo environments. 

\begin{figure}[!htb]
\centering
\begin{subfigure}{0.3\textwidth}
	\centering
	\includegraphics[width=0.99\linewidth]{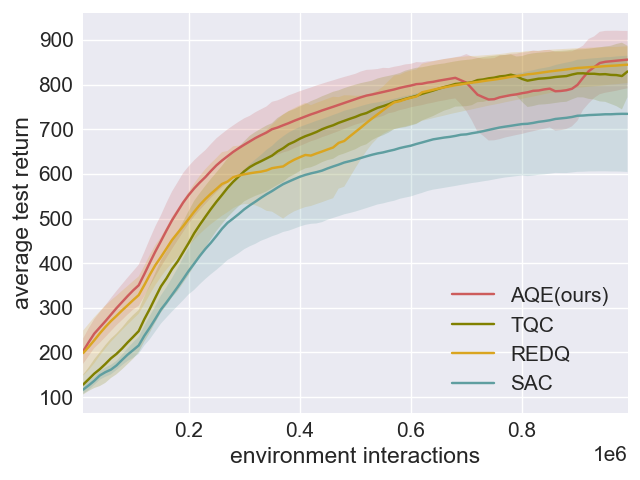} 
	\caption{Cheetah run}
	\label{fig:cheetah-run}
\end{subfigure}
\begin{subfigure}{0.3\textwidth}
	\centering
	\includegraphics[width=0.99\linewidth]{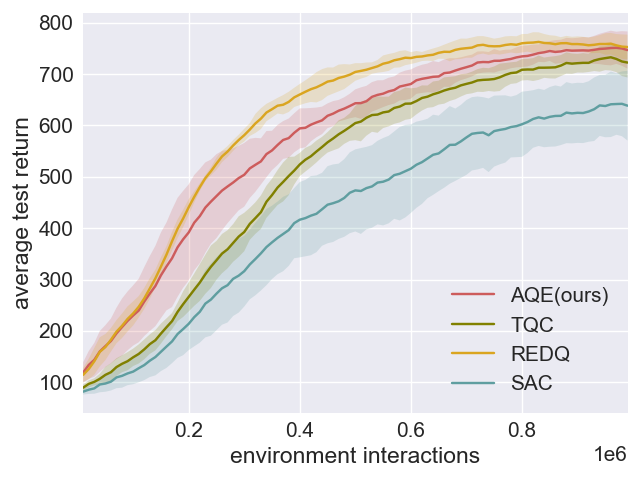}
	\caption{Fish swim}
    \label{fig:fish-swim}
\end{subfigure}
\begin{subfigure}{0.3\textwidth}
	\centering
	\includegraphics[width=0.99\linewidth]{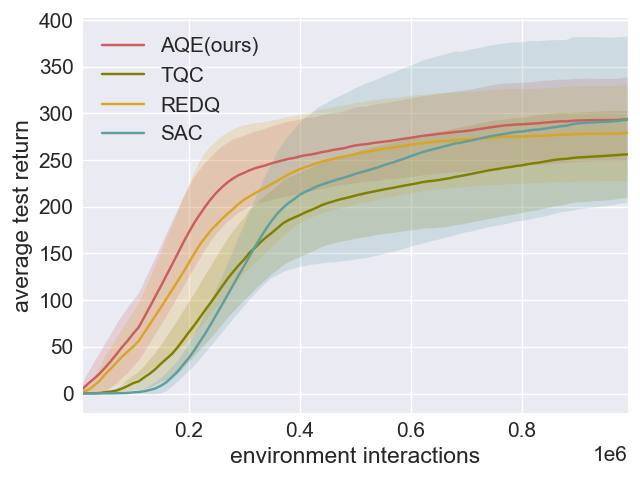}
	\caption{Hopper hop}
	\label{fig:hopper-hop}
\end{subfigure}

\begin{subfigure}{0.3\textwidth}
	\centering
	\includegraphics[width=0.99\linewidth]{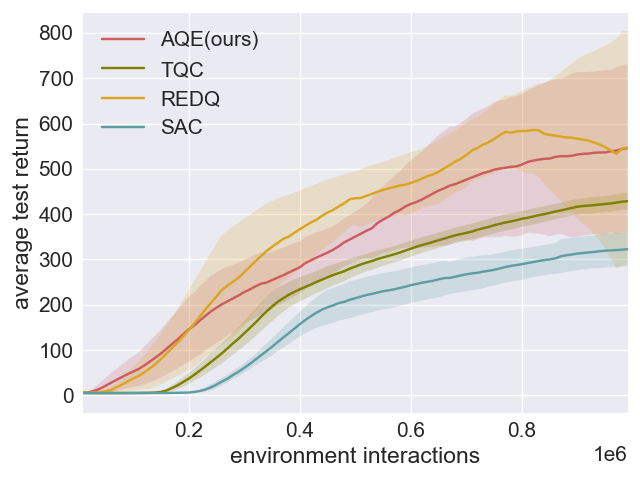}
	\caption{Humanoid stand}
	\label{fig:Humanoid-stand}
\end{subfigure}
\begin{subfigure}{0.3\textwidth}
	\centering
	\includegraphics[width=0.99\linewidth]{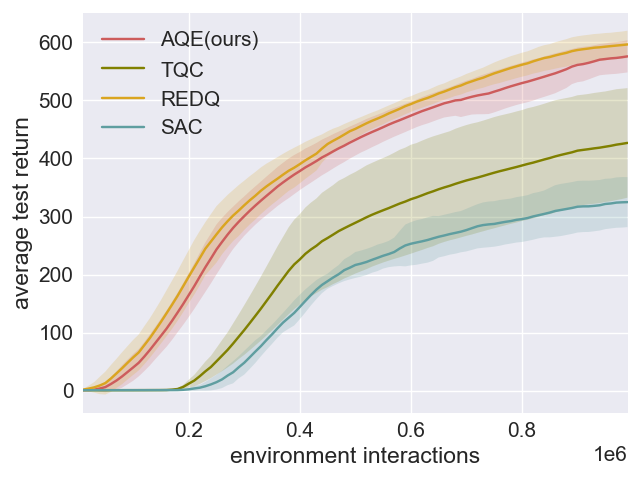}
	\caption{Humanoid walk}
    \label{fig:humanoid-walk}
\end{subfigure}
\begin{subfigure}{0.3\textwidth}
	\centering
	\includegraphics[width=0.99\linewidth]{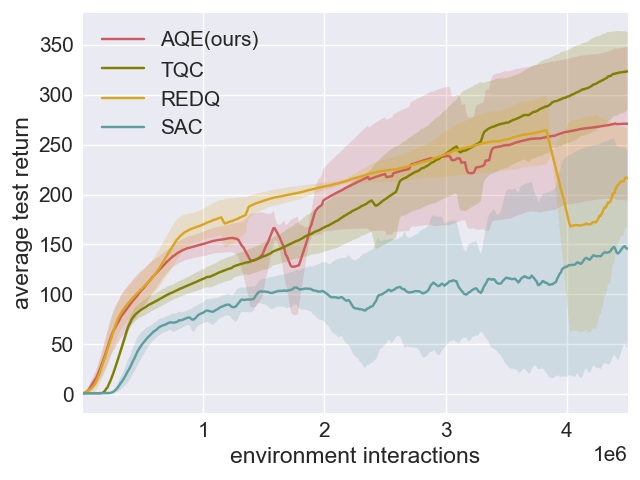}
	\caption{Humanoid run}
	\label{fig:humanoid-run}
\end{subfigure}

\begin{subfigure}{0.3\textwidth}
	\centering
	\includegraphics[width=0.99\linewidth]{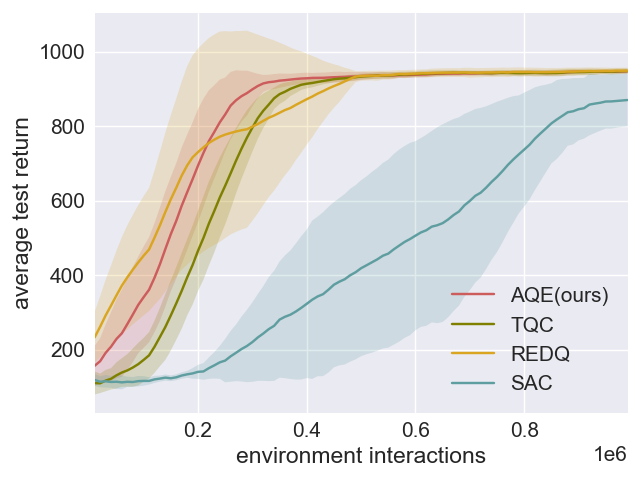}
	\caption{Quadruped walk}
	\label{fig:quadruped-walk}
\end{subfigure}
\begin{subfigure}{0.3\textwidth}
	\centering
	\includegraphics[width=0.99\linewidth]{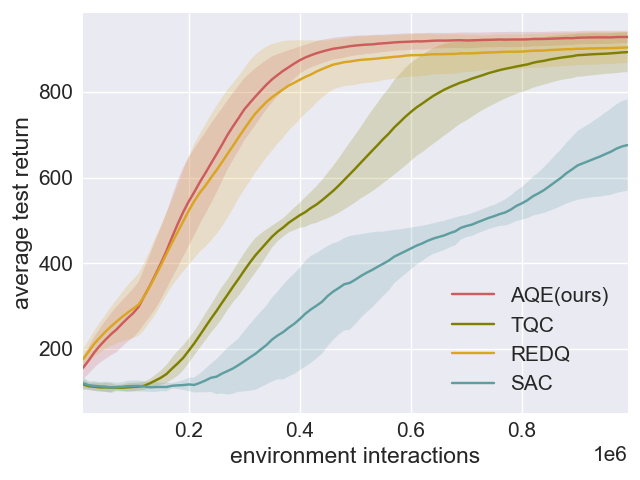}
	\caption{Quadruped run}
    \label{fig:quadruped-run}
\end{subfigure}
\begin{subfigure}{0.3\textwidth}
	\centering
	\includegraphics[width=0.99\linewidth]{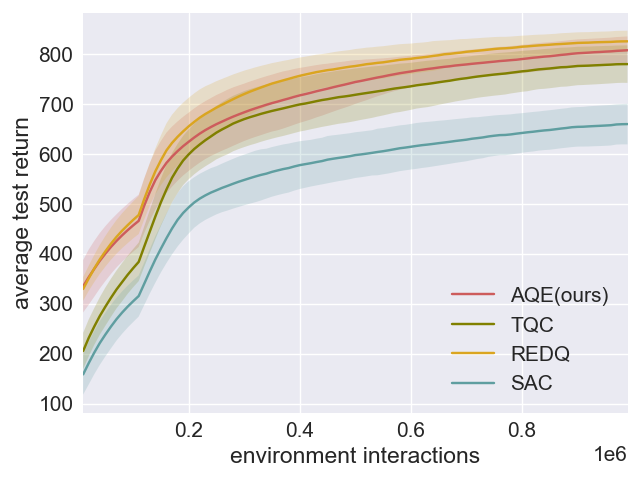}
	\caption{Walker run}
	\label{fig:walker-run}
\end{subfigure}
\caption{AQE versus TQC, REDQ and SAC in DeepMind Control Suite benchmark. AQE and TQC use same hyperparameters across the nine environments.}
\label{fig:dmc-results}
\end{figure}

Figure \ref{fig:dmc-results} shows that in DMC environments with fixed hyperparameters, AQE continues to outperform TQC except for the Humanoid-run environment, where TQC performs better than AQE in the final stage of training. AQE and REDQ have comparable results in some of the DMC environments during traning, however, AQE usually outperforms REDQ in the more challenging environments, such as Hopper-hop, Humanoid-run, and Quadruped-run. We report detailed early-stage and late-stage performance comparisons of all algorithms in Appendix. 

In summary, in the early stage of training (100K data), AQE performs 13.71x better than SAC, 7.59x better than TQC and matches the excellent performance
of REDQ in nine environments. 
In the late-stage training (1M data), AQE always matches or outperforms all other algorithms, except for Humanoid-run, where TQC performs the best. 
On average, AQE performs 37\% better than SAC, 8\% better than TQC, and 3\% better than REDQ. 
Using the same hyperparameters and averaged across nine DMC environments, AQE achieves the asymptotic performance of SAC approximately 3x faster than SAC, 1.57x faster than TQC, and 1.05x faster than REDQ.

\subsection{Ablations}
\label{sec:ablation}
In this section, we use ablations to provide further insight into AQE. We focus on the Ant environment, and run the experiments up to 1M time steps. 
(In the Appendix we provide ablations for the other four environments.)
As in the REDQ paper, we consider not only performance but normalized bias and standard deviation of normalized bias as defined by the REDQ authors. 
We first look at how the ensemble size $N$ affects AQE. The first row in Figure \ref{fig:ablation} shows AQE with $N$ equal to 2, 5, 10 and 15,  with two heads for each Q network, and the percentage of kept Q-values unchanged. As the ensemble size $N$ increases, we generally obtain a more stable average bias, a lower std of bias, and stronger performance. When trained with high UTD value, a relatively small ensemble size, for example, $N=5$, can greatly reduce bias accumulation, resulting in much stronger performance.
This experimental finding is consistent with the results in Theorem 1 in Appendix G.

\begin{figure}[h!tb]
\centering
\begin{subfigure}{0.329\textwidth}
	\centering
	\includegraphics[width=0.99\linewidth]{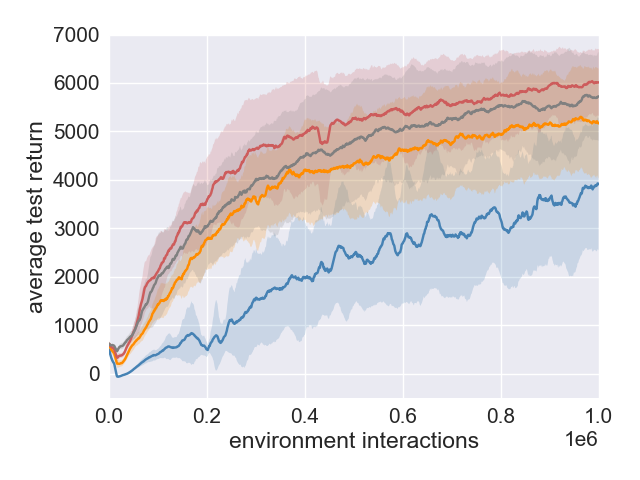}
	\caption{Ensemble size: Perf}
	\label{fig:abl-N}
\end{subfigure}
\begin{subfigure}{0.329\textwidth}
	\centering
	\includegraphics[width=0.99\linewidth]{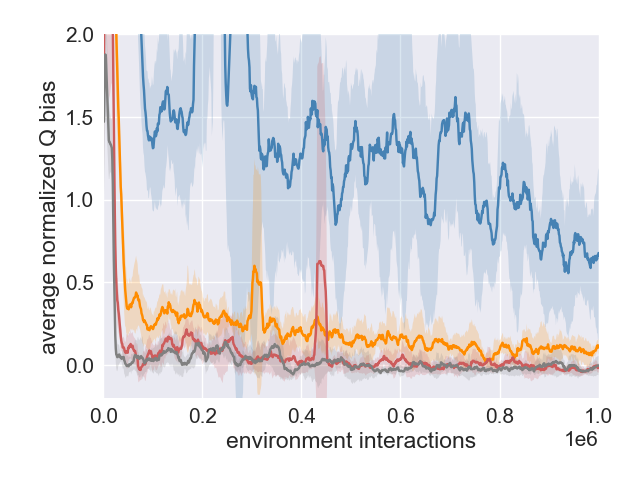}
	\caption{Ensemble size: Bias}
    \label{fig:abl-N-qbias}
\end{subfigure}
\begin{subfigure}{0.329\textwidth}
	\centering
	\includegraphics[width=0.99\linewidth]{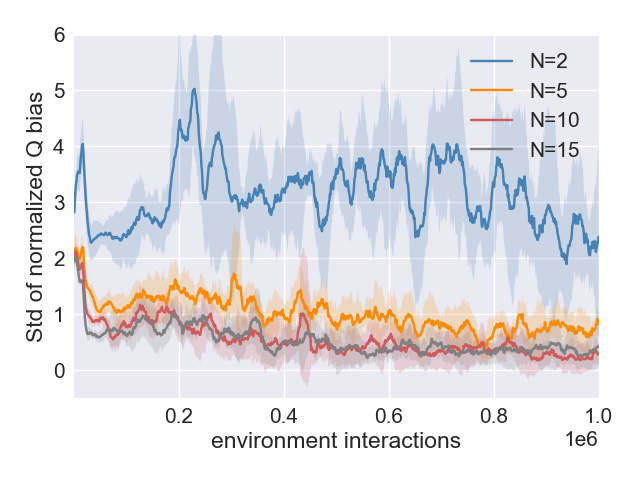}
	\caption{Ensemble size:Std}
	\label{fig:abl-N-std}
\end{subfigure}
\begin{subfigure}{0.329\textwidth}
	\centering
	\includegraphics[width=0.99\linewidth]{ablation_d.png}
	\caption{Keep value: Perf}
	\label{fig:abl-d}
\end{subfigure}
\begin{subfigure}{0.329\textwidth}
	\centering
	\includegraphics[width=0.99\linewidth]{ablation_d_qbias.png}
	\caption{Keep value: Bias}
    \label{fig:abl-d-qbias}
\end{subfigure}
\begin{subfigure}{0.329\textwidth}
	\centering
	\includegraphics[width=0.99\linewidth]{ablation_d_std.png}
	\caption{Keep value: Std}
	\label{fig:abl-d-std}
\end{subfigure}
\begin{subfigure}{0.329\textwidth}
	\centering
	\includegraphics[width=0.99\linewidth]{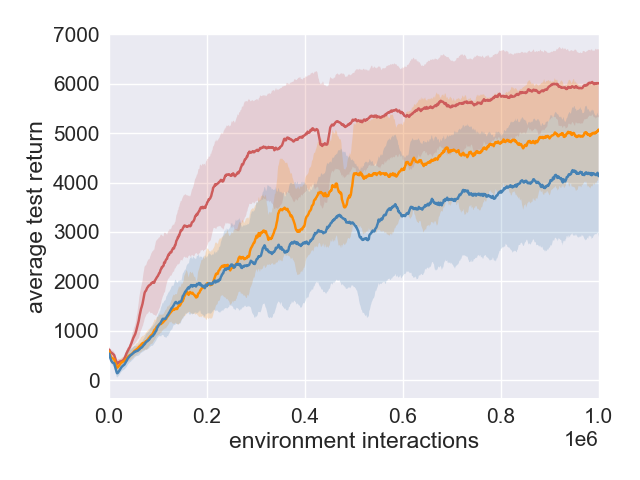}
	\caption{Variations: Perf}
	\label{fig:abl-var}
\end{subfigure}
\begin{subfigure}{0.329\textwidth}
	\centering
	\includegraphics[width=0.99\linewidth]{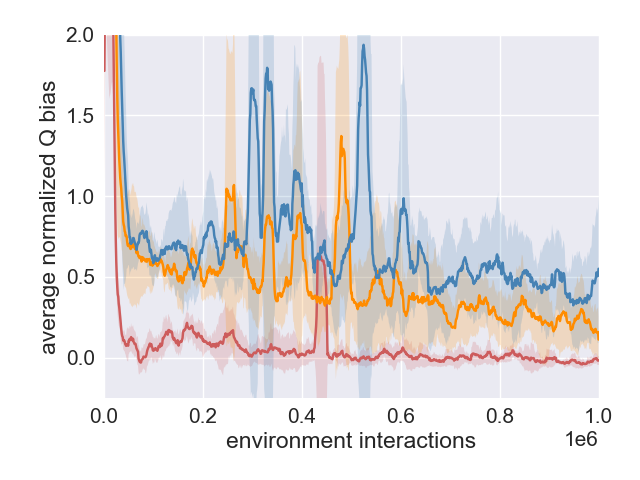}
	\caption{Variations: Bias}
    \label{fig:abl-var-qbias}
\end{subfigure}
\begin{subfigure}{0.329\textwidth}
	\centering
	\includegraphics[width=0.99\linewidth]{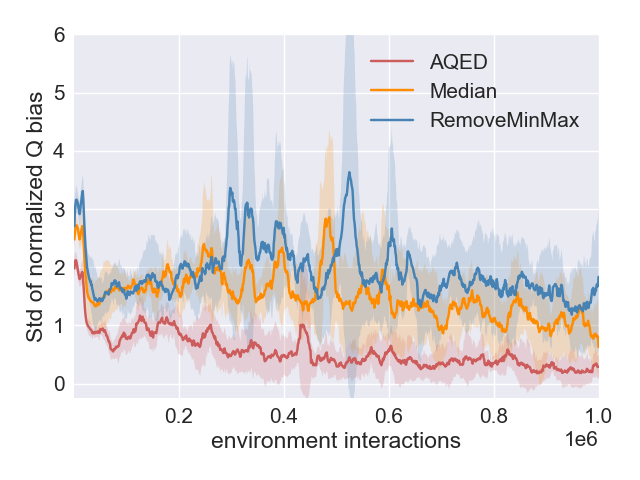}
	\caption{Variations: Std}
	\label{fig:abl-var-std}
\end{subfigure}
\caption{AQE ablation results for Ant. The top row shows the effect of the ensemble size $N$. The second row shows the effect of keep number parameter $K$. The third row compares AQE to some variants. }
\label{fig:ablation}
\end{figure}

The second row in Figure \ref{fig:ablation} shows how the keep parameter can affect the algorithm's performance: under the same high UTD value, as $K$ decreases, the average normalized Q bias goes from over-estimation to under-estimation. Consistent with the theoretical result in Theorem 1, by decreasing $K$ we lower the average bias. When $K$ becomes too small, the Q estimate becomes too conservative and starts to have negative bias, which makes learning difficult. We see that $K=16$ has an average bias closest to 0 and also a consistently small std of bias. These results are similar for the other four environments, as shown in the Appendix.

The third row in Figure \ref{fig:ablation} shows results for variants of the target computation methods. The Median curve uses the median value of all the Q estimates in the ensemble to compute the Q target. The RemoveMinMax curve drops the minimum and maximum values of all the Q estimates in the ensemble to compute the Q target. We see that these two variants give larger positive Q bias values. 

We also considered different combinations of ensemble size $N$ and the number of multi-heads $h$ while keeping the total number of Q-function estimates $N \cdot h$ fixed. We performed these experiments for all five environments.
In terms of performance, we found the two best combinations to be $N=20$, $h=1$ and $N=10$, $h=2$, with the former being about 50\% slower than the latter in terms of computation time. We also consider endowing REDQ with the same multi-head ensemble architecture as AQE and find that it does not improve REDQ substantially (additional details in Appendix F). 

We also consider comparing AQE with SAC and TQC, all using a more aggressive UTD ratio of $G=5$. Although SAC becomes more sample efficient with $G=5$, AQE continues to outperform both algorithms except for Humanoid, where once again TQC performs somewhat better than AQE for the final stage of training (Additional details in appendix E).

\section{Conclusion}

Perhaps the most important takeaway from this study is that a simple model-free algorithm can do surprisingly well, providing state-of-art performance at all stages of training. There is no need for a model, distributional representation of the return, or in-target randomization to achieve high sample efficiency and asymptotic performance. 


With extensive experiments and ablations, we show that AQE is both performant and robust. In both OpenAI Gym and DMControl, AQE is able to achieve superior performance in all stages of training with the same hyperparameters, and it can be further improved with per-task finetuning. Our ablations show that AQE is robust to small changes in the hyperparameters. Our theoretical results complement the experimental results, showing that the estimation bias can be controlled by either varying the ensemble size $N$ or the keep parameter $K$. 

AQE along with prior works show that a high update-to-data ratio combined with refined bias control can lead to very significant performance gain. An interesting future work direction is to investigate whether there are other critical factors (in addition to bias control) that can allow us to further benefit from a high update-to-data ratio, and achieve even better sample efficiency with simple model-free methods. 

\bibliographystyle{abbrvnat}
\bibliography{aqe}

\section*{Checklist}

The checklist follows the references.  Please
read the checklist guidelines carefully for information on how to answer these
questions.  For each question, change the default \answerTODO{} to \answerYes{},
\answerNo{}, or \answerNA{}.  You are strongly encouraged to include a {\bf
justification to your answer}, either by referencing the appropriate section of
your paper or providing a brief inline description.  For example:
\begin{itemize}
  \item Did you include the license to the code and datasets? \answerNo{The code and the data are proprietary.}
  \item Did you include the license to the code and datasets? \answerNA{}
\end{itemize}
Please do not modify the questions and only use the provided macros for your
answers.  Note that the Checklist section does not count towards the page
limit.  In your paper, please delete this instructions block and only keep the
Checklist section heading above along with the questions/answers below.

\begin{enumerate}

\item For all authors...
\begin{enumerate}
  \item Do the main claims made in the abstract and introduction accurately reflect the paper's contributions and scope?
    \answerYes{}
  \item Did you describe the limitations of your work?
    \answerYes{}
  \item Did you discuss any potential negative societal impacts of your work?
    \answerNA{}
  \item Have you read the ethics review guidelines and ensured that your paper conforms to them?
    \answerYes{}
\end{enumerate}

\item If you are including theoretical results...
\begin{enumerate}
  \item Did you state the full set of assumptions of all theoretical results?
    \answerYes{}
        \item Did you include complete proofs of all theoretical results?
    \answerYes{}
\end{enumerate}

\item If you ran experiments...
\begin{enumerate}
  \item Did you include the code, data, and instructions needed to reproduce the main experimental results (either in the supplemental material or as a URL)?
    \answerYes{}
  \item Did you specify all the training details (e.g., data splits, hyperparameters, how they were chosen)?
    \answerYes{}
        \item Did you report error bars (e.g., with respect to the random seed after running experiments multiple times)?
    \answerYes{}
        \item Did you include the total amount of compute and the type of resources used (e.g., type of GPUs, internal cluster, or cloud provider)?
    \answerYes{}
\end{enumerate}

\item If you are using existing assets (e.g., code, data, models) or curating/releasing new assets...
\begin{enumerate}
  \item If your work uses existing assets, did you cite the creators?
    \answerNA{}
  \item Did you mention the license of the assets?
    \answerNA{}
  \item Did you include any new assets either in the supplemental material or as a URL?
    \answerNA{}
  \item Did you discuss whether and how consent was obtained from people whose data you're using/curating?
    \answerNA{}
  \item Did you discuss whether the data you are using/curating contains personally identifiable information or offensive content?
    \answerNA{}
\end{enumerate}

\item If you used crowdsourcing or conducted research with human subjects...
\begin{enumerate}
  \item Did you include the full text of instructions given to participants and screenshots, if applicable?
    \answerNA{}
  \item Did you describe any potential participant risks, with links to Institutional Review Board (IRB) approvals, if applicable?
    \answerNA{}
  \item Did you include the estimated hourly wage paid to participants and the total amount spent on participant compensation?
    \answerNA{}
\end{enumerate}

\end{enumerate}


\newpage

\noindent\rule{\textwidth}{4pt}
\begin{center}
    \LARGE\bf Aggressive Q-Learning with Ensembles:\\Achieving Both High Sample Efficiency and High Asymptotic Performance \\ Supplementary Material
\end{center}
\noindent\rule{\textwidth}{1pt}

\appendix

\section{Hyperparameters and implementation details}
Table \ref{tab:hyper-parameter} gives a list of hyperparameters used in the experiments. Most of AQE's hyperparameters are made the same as in the REDQ paper to ensure fairness and consistency in comparisons, except that AQE has 2-head critic networks. As compared with AQE and REDQ, TQC uses a larger critic network with 3 layers of 512 units per layer. In table \ref{tab:env-dep-hp}, we report the dropped atoms $d$ for TQC and the number of Q values we keep in the ensemble to calculate the target in AQE. In algorithm \ref{alg:aqe}, we provide detailed pseudo-code. 

\begin{table}[ht]
    \centering
    \caption{Hyperparameter values.}
    \vskip 0.1in
    \begin{tabular}{l|c c c c}
    \hline
         Hyperparameters & AQE & SAC & REDQ & TQC \\
    \hline
         optimizer & \multicolumn{4}{c}{Adam\cite{}} \\
         learning rate & \multicolumn{4}{c}{$3 \cdot 10^\text{$-$4}$} \\
         discount($\gamma$) & \multicolumn{4}{c}{0.99} \\
         target smoothing coefficient($\rho$) & \multicolumn{4}{c}{0.005} \\
         replay buffer size & \multicolumn{4}{c}{$1 \cdot 10^{6}$} \\
         number of critics $N$ & 10 & 2 & 10 & 5 \\
         number of hidden layers in critic networks & 2 & 2 & 2 & 3 \\
         size of hidden layers in critic networks & 256 & 256 & 256 & 512 \\
         number of heads in critic networks $h$ & 2 & 1 & 1 & 25 \\
         number of hidden layers in policy network & \multicolumn{4}{c}{2}\\
         size of hidden layers in policy network & \multicolumn{4}{c}{256} \\
         mini-batch size & \multicolumn{4}{c}{256} \\
         nonlinearity & \multicolumn{4}{c}{ReLU} \\
         UTD ratio G & 5 & 1 & 5 & 1 \\
    \hline
    \end{tabular}
    \label{tab:hyper-parameter}
\end{table}

\begin{table}[ht]
    \centering
    \caption{Environment-dependent hyper-parameters for TQC and AQE.}
    \vskip 0.1in
    \begin{tabular}{l c c}
    \hline
        Environment & Dropped atoms per critic & Kept Q values out of $N \cdot h$ values\\
    \hline
        Hopper & 5 & 10  \\
        HalfCheetah & 0 & 20 \\ 
        Walker & 2  &  16 \\
        Ant & 2 & 16 \\
        Humanoid & 2 & 16 \\
    \hline
    \end{tabular}
    \label{tab:env-dep-hp}
\end{table}

\clearpage

\section{Additional Results for AQE, TQC, REDQ and SAC in MuJoCo Benchamrk with Fixed Hyper-parameters}

We present the experiment on the five MuJoCo environments with the same hyperparameter values across environments for TQC (drop two atoms per network) and AQE
(K = 16) in Figure \ref{fig:same-hp}. 

\begin{figure}[h!tb]
\centering
\begin{subfigure}{0.329\textwidth}
	\centering
	\includegraphics[width=0.99\linewidth]{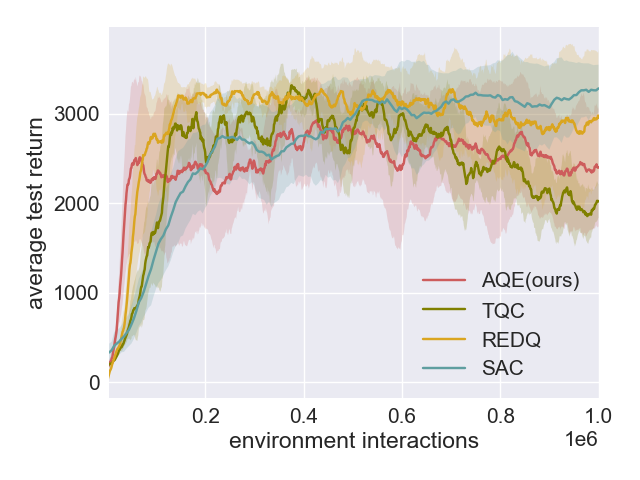}
	\caption{Hopper-v2}
	\label{fig:same-hp-hopper}
\end{subfigure}
\begin{subfigure}{0.329\textwidth}
	\centering
	\includegraphics[width=0.99\linewidth]{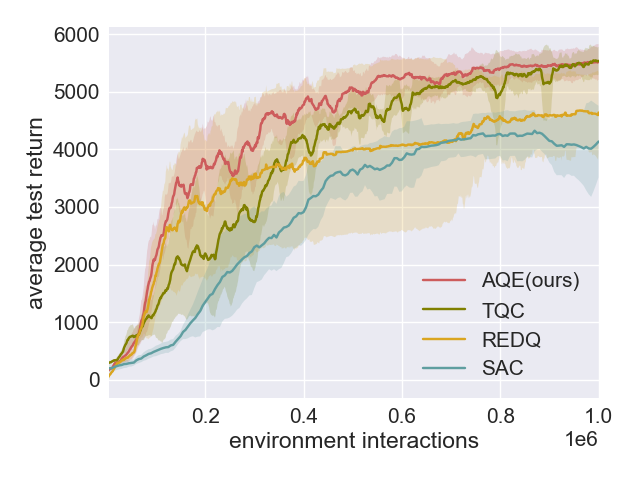}
	\caption{Walker2d-v2}
    \label{fig:same-hp-walker2d}
\end{subfigure}
\begin{subfigure}{0.329\textwidth}
	\centering
	\includegraphics[width=0.99\linewidth]{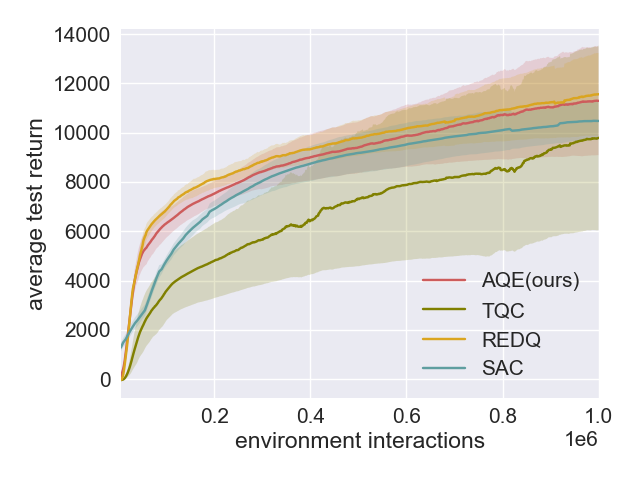}
	\caption{HalfCheetah-v2}
	\label{fig:same-hp-halfcheetah}
\end{subfigure}
\begin{subfigure}{0.329\textwidth}
	\centering
	\includegraphics[width=0.99\linewidth]{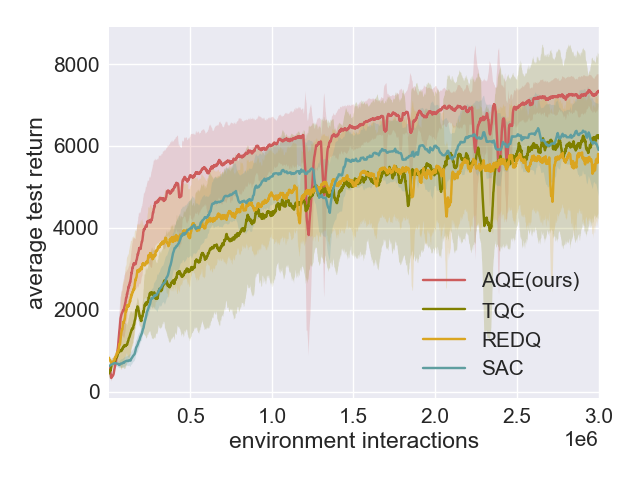}
	\caption{Ant-v2}
	\label{fig:same-hp-ant}
\end{subfigure}
\begin{subfigure}{0.329\textwidth}
	\centering
	\includegraphics[width=0.99\linewidth]{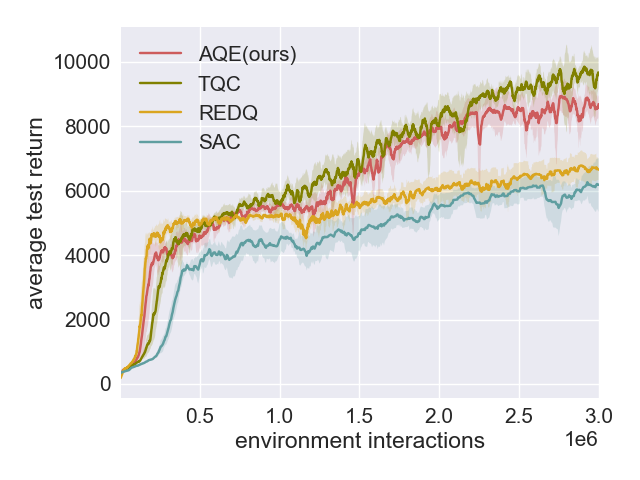}
	\caption{Humanoid-v2}
	\label{fig:same-hp-humanoid}
\end{subfigure}
\caption{Performance for AQE and TQC using same hyper-parameters across the five environments. AQE uses $K=16$ and TQC uses atoms = 2 per critic.}
\label{fig:same-hp}
\end{figure}

\begin{table}[ht]
    \centering
    \caption{Early-stage performance comparison of SAC, TQC, REDQ and AQE when AQE and TQC using the same hyperparameters across the environments. 
    On average, AQE performs 2.71 times better than SAC, 1.59 times better than TQC and 1.02 times better than REDQ.}
    \vskip 0.1in
    \begin{tabular}{l c c c c c c c}
         Amount of data & SAC & TQC & REDQ & AQE & AQE/SAC & AQE/TQC & AQE/REDQ \\
         \hline
         Hopper at 100K & 1456 & 1719 & 2747 & 2294 & 1.58 & 1.33 & 0.84 \\
         Walker2d at 100K & 501 & 1215 & 1810 & 2150 & 4.29 & 1.77 & 1.19 \\
         HalfCheetah at 100K & 3055 & 3594 & 6876 & 6325 & 2.10 & 1.76 & 0.92 \\
         Ant at 250K & 2107 & 2344 & 3279 & 4153 & 1.97 & 1.77 & 1.27 \\
         Humanoid at 250K & 1094 & 3038 & 4535 & 3973 & 3.63 & 1.31 & 0.88 \\
         \hline
         Average at early stage & - & - & - & - & 2.71 & 1.59 & 1.02 \\
    \end{tabular}
    \label{tab:early-performance-same-hp}
\end{table}

\begin{table}[!ht]
    \centering
    \caption{Late-stage performance comparison of SAC, TQC, REDQ and AQE when AQE and TQC using the same hyperparameters across the environments. 
    On average, AQE performs 16\% better than SAC, 9\% better than TQC and 11\% times better than REDQ.}
    \vskip 0.1in
    \begin{tabular}{l c c c c c c c}
         Amount of data & SAC & TQC & REDQ & AQE & AQE/SAC & AQE/TQC & AQE/REDQ \\
         \hline
         Hopper at 1M & 3282 & 2024 & 2954 & 2404 & 0.73 & 1.19 & 0.81 \\
         Walker2d at 1M & 4134 & 5532 & 4637 & 5517 & 1.33 & 1.00 & 1.19 \\
         HalfCheetah at 1M & 10475 & 9792 & 11562 & 11293 & 1.08 & 1.15 & 0.98 \\
         Ant at 3M & 5903 & 6186 & 5785 & 7345 & 1.24 & 1.19 & 1.27 \\
         Humanoid at 3M & 6177 & 9593 & 6649 & 8680 & 1.41 & 0.91 & 1.31 \\
         \hline
         Average at late stage & - & - & - & - & 1.16 & 1.09 & 1.11 \\
    \end{tabular}
    \label{tab:late-performance-same-hp}
\end{table}

\clearpage

\section{Additional Results for AQE, TQC, REDQ and SAC in DeepMind Control Suite Benchmark}

Figure \ref{fig:dmc-results} presents the performance of AQE, TQC, REDQ and SAC for 9 DeepMind Control Suite (DMC) environments.  We can see that AQE continues to outperform TQC except for Humanoid run environment,  where TQC performs better than AQE in the final stage training. AQE and REDQ have comparable results in some of the DMC environments, however, AQE usually outperforms REDQ in the more challenging environments, such as Hopper-hop, Humanoid-run and Quadruped-run. We report detailed early-stage and late-stage performance comparisons of all algorithms in Table \ref{tab:early-performance-same-hp-dmc} and Table \ref{tab:late-performance-same-hp-dmc}. On average, in the early stage of training, AQE performs 13.71 times better than SAC, 7.59 times better than TQC and 1.02 times better than REDQ. In the late-stage training, on average, AQE performs 1.37 times better than SAC, 1.08 times better than TQC and 1.03 times better than REDQ.

\begin{table}[ht]
    \centering
    \caption{Early-stage performance comparison of SAC, TQC, REDQ and AQE when AQE and TQC using the same hyperparameters across the DMC environments. 
    On average, AQE performs 13.71 times better than SAC, 7.59 times better than TQC and 1.02 times better than REDQ.}
    \vskip 0.1in
    \begin{tabular}{l c c c c c c c}
         Amount of data & SAC & TQC & REDQ & AQE & AQE/SAC & AQE/TQC & AQE/REDQ \\
         \hline
         Cheetah-run at 100K & 205 & 235 & 317 & 339 & 1.65 & 1.44 & 1.07 \\
         Fish-swim at 100K & 121 & 149 & 234 & 230 & 1.90 & 1.54 & 0.98 \\
         Hopper-hop at 100K & 2 & 11 & 50 & 64 & 32 & 5.81 & 1.28 \\
         Quadruped-walk at 100K & 116 & 172 & 452 & 341 & 2.94 & 1.98 & 0.75 \\
         Quadruped-run at 100K & 114 & 111 & 294 & 284 & 2.49 & 2.56 & 0.97 \\
         Walker-run at 100K & 305 & 372 & 468 & 457 & 1.50 & 1.23 & 0.98 \\
         Humanoid-stand at 100K & 5 & 5 & 37 & 52 & 10.4 &10.4& 1.41 \\
         Humanoid-walk at 100K & 1 & 1 & 57 & 40 & 40 & 40 & 0.70 \\
         Humanoid-run at 250K & 2 & 18 & 59 & 61 & 30.5 & 3.39 & 1.03 \\
         \hline
         Average at early stage & - & - & - & - & 13.71 & 7.59 & 1.02 \\
    \end{tabular}
    \label{tab:early-performance-same-hp-dmc}
\end{table}

\begin{table}[ht]
    \centering
    \caption{Late-stage performance comparison of SAC, TQC, REDQ and AQE when AQE and TQC using the same hyperparameters across the DMC environments. 
    On average, AQE performs 1.37 times better than SAC, 1.08 times better than TQC and 1.03 times better than REDQ.}
    \vskip 0.1in
    \begin{tabular}{l c c c c c c c}
         Amount of data & SAC & TQC & REDQ & AQE & AQE/SAC & AQE/TQC & AQE/REDQ \\
         \hline
         Cheetah-run at 1M & 734 & 829 & 844 & 856 & 1.17 & 1.03 & 1.01 \\
         Fish-swim at 1M & 639 & 722 & 753 & 747 & 1.17 & 1.03 & 0.99 \\
         Hopper-hop at 1M & 293 & 256 & 279 & 294 & 1.00 & 1.15 & 1.05 \\
         Quadruped-walk at 1M & 871 & 948 & 949 & 948 & 1.09 & 1.00 & 1.00 \\
         Quadruped-run at 1M & 676 & 893 & 904 & 928 & 1.37 & 1.04 & 1.03 \\
         Walker-run at 1M & 660 & 780 & 826 & 808 & 1.22 & 1.04 & 0.98 \\
         Humanoid-stand at 1M & 323 & 429 & 547 & 546 & 1.69 & 1.27 & 1.00 \\
         Humanoid-walk at 1M & 325 & 427 & 596 & 576 & 1.77 & 1.35 & 0.97 \\
         Humanoid-run at 4.5M & 146 & 324 & 216 & 271 & 1.86 & 0.84 & 1.25 \\
         \hline
         Average at late stage & - & - & - & - & 1.37 & 1.08 & 1.03 \\
    \end{tabular}
    \label{tab:late-performance-same-hp-dmc}
\end{table}

\begin{table}[htb]
    \begin{center}
    \caption{Sample efficiency comparison of SAC, TQC, REDQ and AQE. The numbers show the amount of data collected when the specified performance level is reached (roughly corresponding to 90\% of SAC's final performance). The last three columns show how many times AQE is more sample efficient than SAC, TQC and REDQ in reaching that performance level.} 
    \vskip 0.1in
    \begin{tabular}{l c c c c c c c}
    Performance & SAC & TQC & REDQ & AQE & AQE/SAC & AQE/TQC & AQE/REDQ \\
    \hline
    Cheetah-run at 700 & 746K & 440K & 506K & 350K & 2.13 & 1.26 & 1.45 \\
    Fish-swim at 600 & 794K & 494K & 317K & 417K & 1.90 & 1.18 & 0.76 \\
    Hopper-hop at 250 & 580K & 856K & 451K & 371K & 1.56 & 2.31 & 1.22 \\
    Quadruped-walk at 800 & 844K & 301K & 302K & 236K & 3.58 & 1.28 & 1.28 \\
    Quadruped-run at 650 & 942K & 521K & 267K & 248K & 3.80 & 2.10 & 1.08 \\
    Walker-run at 600 & 516K & 201K & 156K & 174K & 2.97 & 1.16 & 0.90 \\
    Humanoid-stand at 250 & 626K & 429K & 279K & 342K & 1.83 & 1.25 & 0.82 \\
    Humanoid-walk at 300 & 820K & 523K & 279K & 300K & 2.73 & 1.74 & 0.93 \\
    Humanoid-run at 120 & 3940K & 1100K & 602K & 603K & 6.53 & 1.82 & 1.00 \\
    \hline
    Average & - & - & - & - & 3.00 & 1.57 & 1.05 \\
    \end{tabular}
    \label{tab:sample-efficiency-dmc}
    \end{center}
\end{table}

\clearpage

\section{Additional Results for AQE, SAC-5 and TQC-5}
Figure \ref{fig:ablation-utd-appendix} presents the performance of AQE, SAC-5 and TQC-5 for all the environments. SAC-5 and TQC-5 uses UTD ratio G = 5 for SAC and TQC, respectively. We can see that AQE continues to outperform both algorithms except for Humanoid, where TQC performs somewhat better than AQE in the final stage training. SAC becomes more sample efficient with $G=5$; however, AQE still beats SAC-5 by a large margin.

\begin{figure}[h!tb]
\centering
\begin{subfigure}{0.3\textwidth}
	\centering
	\includegraphics[width=0.99\linewidth]{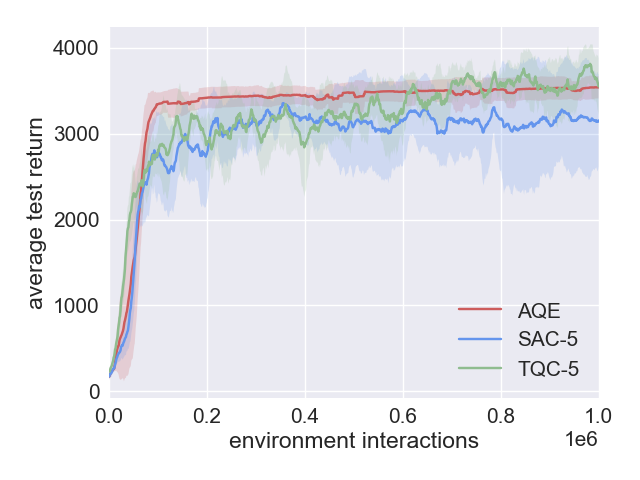}
	\caption{Performance, Hopper}
	\label{fig:hopper-abl-G}
\end{subfigure}
\begin{subfigure}{0.3\textwidth}
	\centering
	\includegraphics[width=0.99\linewidth]{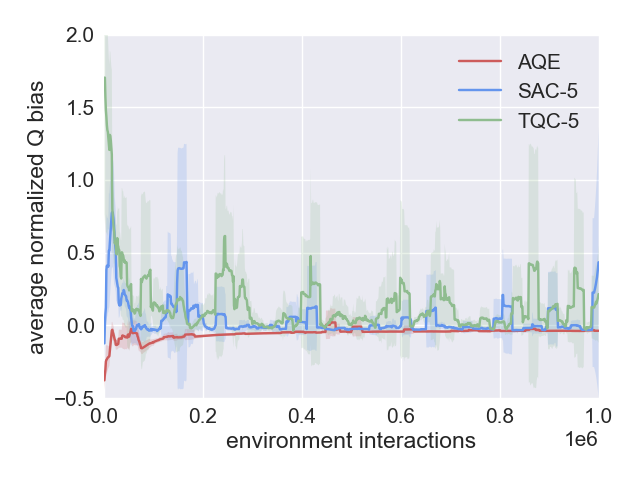}
	\caption{Average normalized bias}
    \label{fig:hopper-abl-G-qbias}
\end{subfigure}
\begin{subfigure}{0.3\textwidth}
	\centering
	\includegraphics[width=0.99\linewidth]{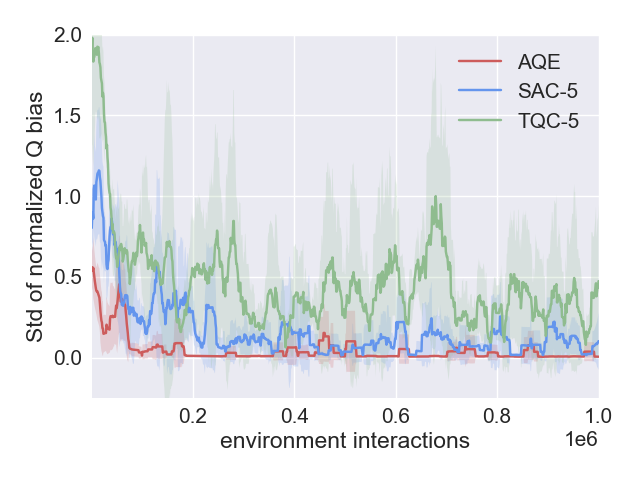}
	\caption{Std of normalized bias}
	\label{fig:hopper-abl-G-std}
\end{subfigure}

\begin{subfigure}{0.3\textwidth}
	\centering
	\includegraphics[width=0.99\linewidth]{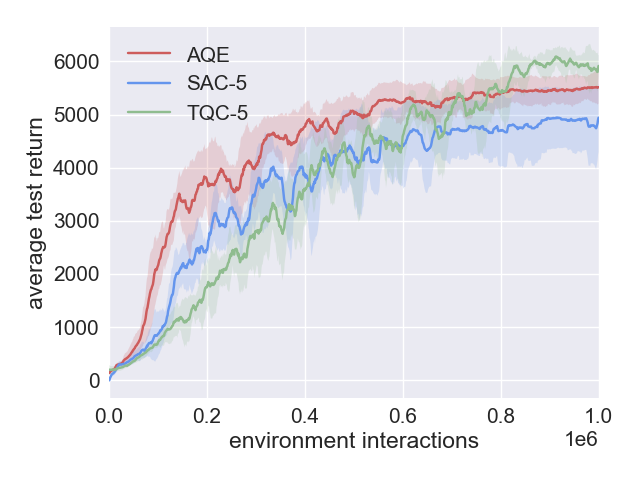}
	\caption{Performance, Walker}
	\label{fig:walker-abl-G}
\end{subfigure}
\begin{subfigure}{0.3\textwidth}
	\centering
	\includegraphics[width=0.99\linewidth]{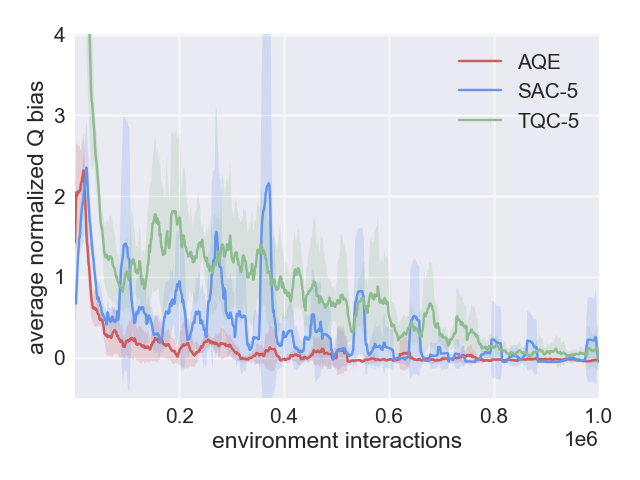}
	\caption{Average normalized bias}
    \label{fig:walker-abl-G-qbias}
\end{subfigure}
\begin{subfigure}{0.3\textwidth}
	\centering
	\includegraphics[width=0.99\linewidth]{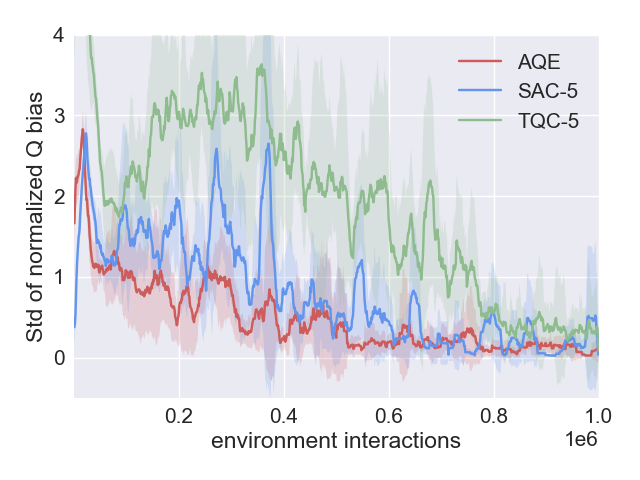}
	\caption{Std of normalized bias}
	\label{fig:walker-abl-G-std}
\end{subfigure}

\begin{subfigure}{0.3\textwidth}
	\centering
	\includegraphics[width=0.99\linewidth]{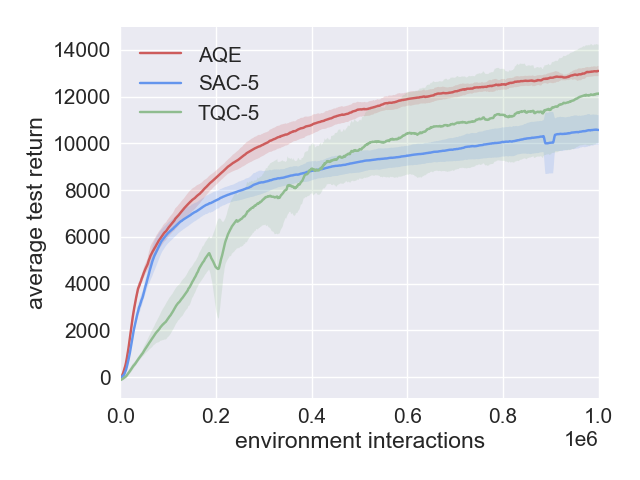}
	\caption{Performance, HalfCheetah}
	\label{fig:halfcheetah-abl-G}
\end{subfigure}
\begin{subfigure}{0.3\textwidth}
	\centering
	\includegraphics[width=0.99\linewidth]{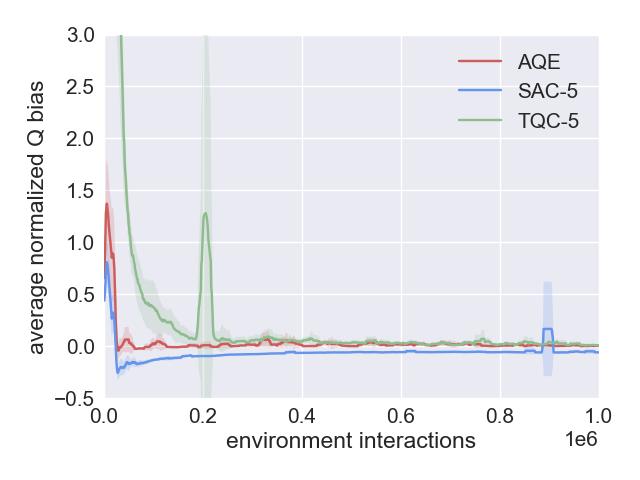}
	\caption{Average normalized bias}
    \label{fig:halfcheetah-abl-G-qbias}
\end{subfigure}
\begin{subfigure}{0.3\textwidth}
	\centering
	\includegraphics[width=0.99\linewidth]{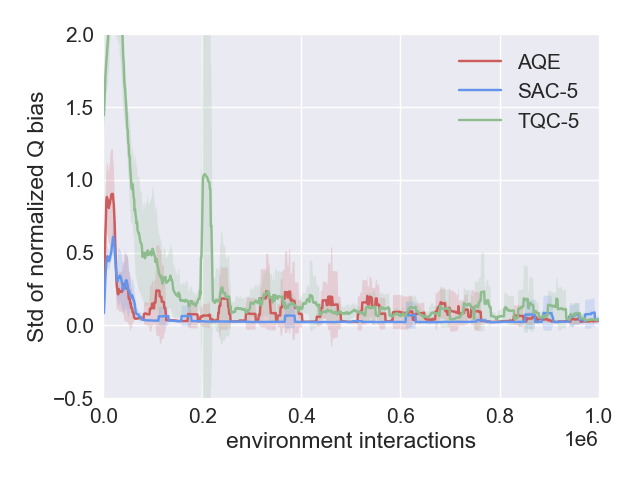}
	\caption{Std of normalized bias}
	\label{fig:halfcheetah-abl-G-std}
\end{subfigure}

\begin{subfigure}{0.3\textwidth}
	\centering
	\includegraphics[width=0.99\linewidth]{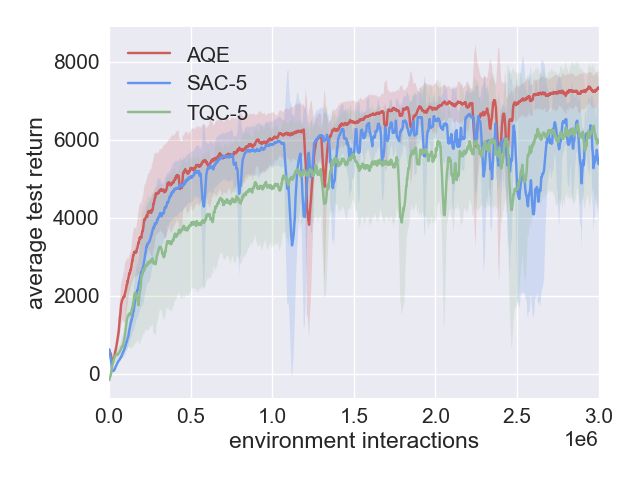}
	\caption{Performance, Ant}
	\label{fig:ant-abl-G}
\end{subfigure}
\begin{subfigure}{0.3\textwidth}
	\centering
	\includegraphics[width=0.99\linewidth]{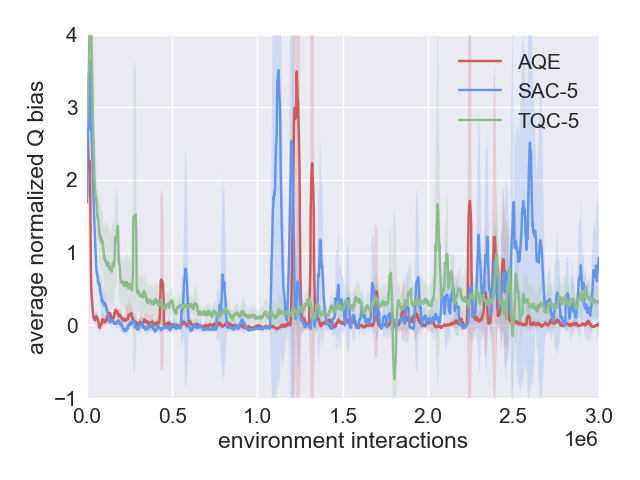}
	\caption{Average normalized bias}
    \label{fig:ant-abl-G-qbias}
\end{subfigure}
\begin{subfigure}{0.3\textwidth}
	\centering
	\includegraphics[width=0.99\linewidth]{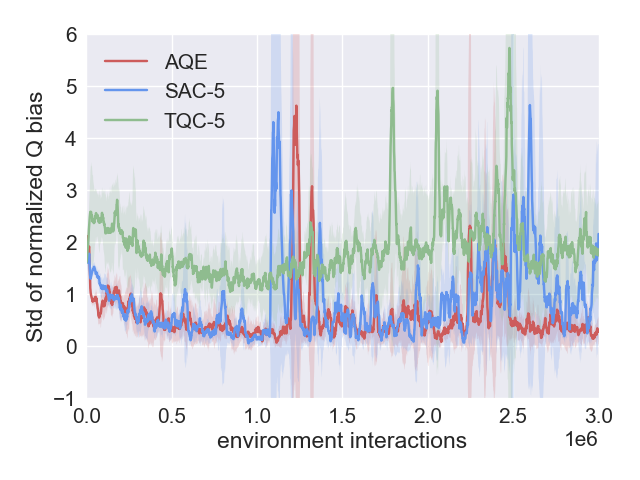}
	\caption{Std of normalized bias}
	\label{fig:ant-abl-G-std}
\end{subfigure}
\begin{subfigure}{0.3\textwidth}
	\centering
	\includegraphics[width=0.99\linewidth]{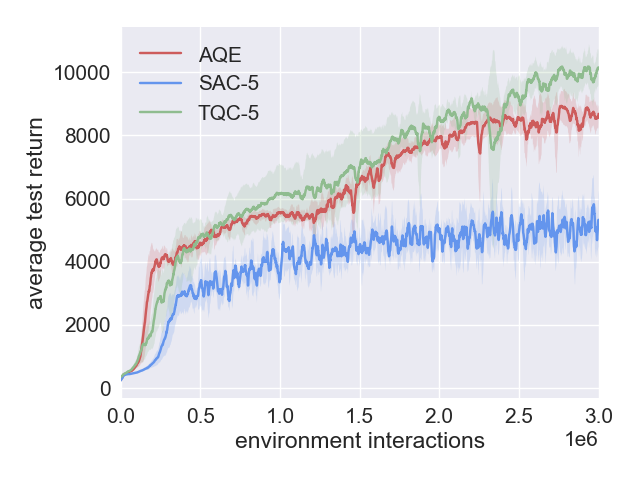}
	\caption{Performance, Humanoid}
	\label{fig:humanoid-abl-G}
\end{subfigure}
\begin{subfigure}{0.3\textwidth}
	\centering
	\includegraphics[width=0.99\linewidth]{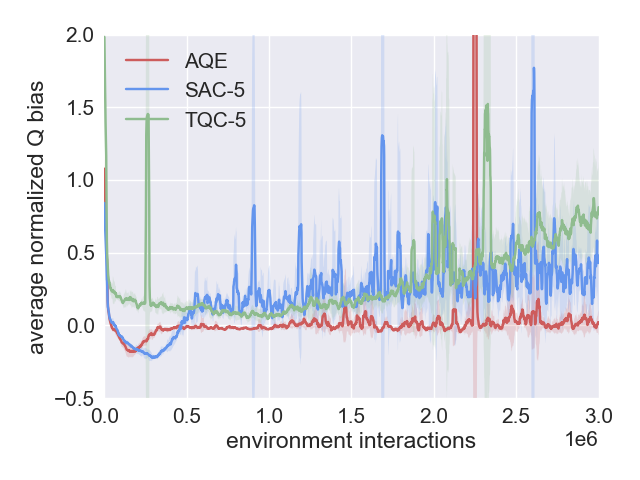}
	\caption{Average normalized bias}
    \label{fig:humanoid-abl-G-qbias}
\end{subfigure}
\begin{subfigure}{0.3\textwidth}
	\centering
	\includegraphics[width=0.99\linewidth]{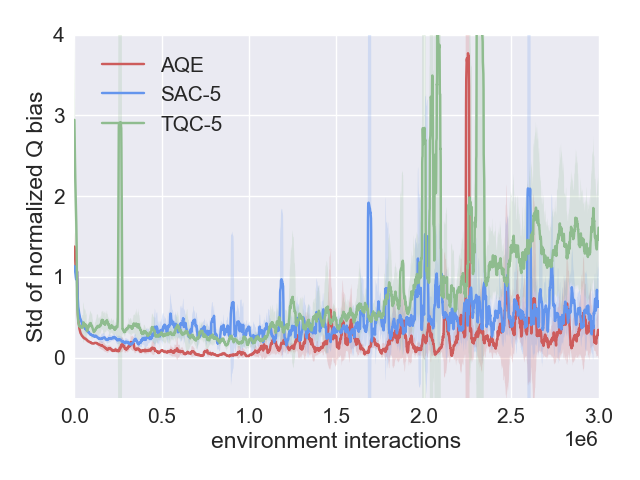}
	\caption{Std of normalized bias}
	\label{fig:humanoid-abl-G-std}
\end{subfigure}
\caption{Performance, average and std of normalized Q bias for AQE, SAC-5 and TQC-5. All of the algorithms in this experiment use UTD = 5.}
\label{fig:ablation-utd-appendix}
\end{figure}

\clearpage
\section{Additional Results for parameter $K$}
Due to lack of space, Figure \ref{fig:ablation} only compares  different AQE keep numbers $K$ for Ant. Figure \ref{fig:ablation-k-appendix} shows the performance, average estimation bias and standard deviation for all five environments.   Consistent with the theoretical result in Theorem 1, by decreasing $K$, we lower the average bias.

\begin{figure}[h!tb]
\centering
\begin{subfigure}{0.3\textwidth}
	\centering
	\includegraphics[width=0.99\linewidth]{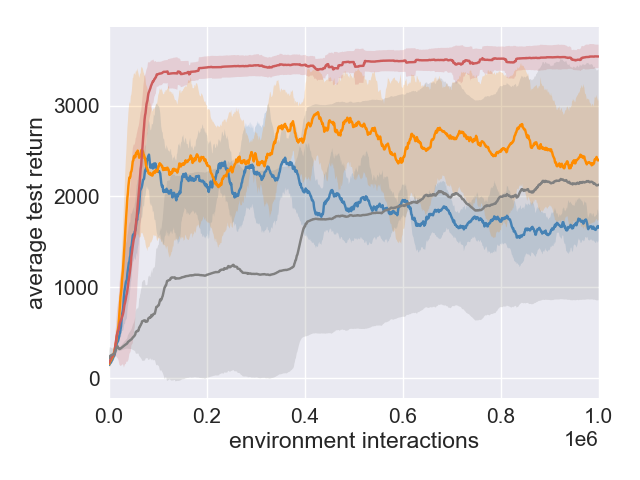}
	\caption{Performance, Hopper}
	\label{fig:hopper-abl-k}
\end{subfigure}
\begin{subfigure}{0.3\textwidth}
	\centering
	\includegraphics[width=0.99\linewidth]{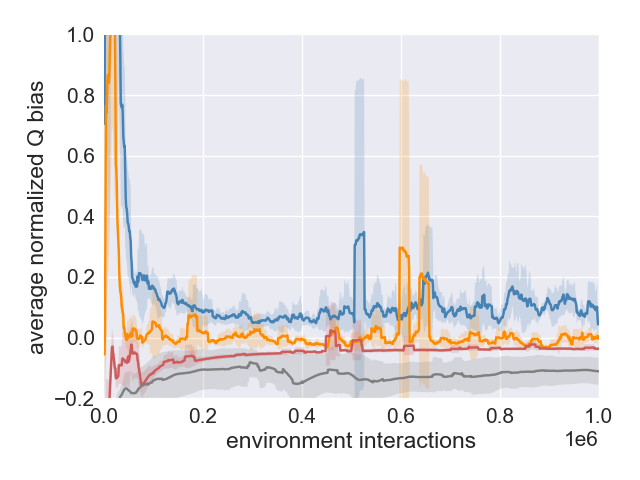}
	\caption{Average normalized bias}
    \label{fig:hopper-abl-k-qbias}
\end{subfigure}
\begin{subfigure}{0.3\textwidth}
	\centering
	\includegraphics[width=0.99\linewidth]{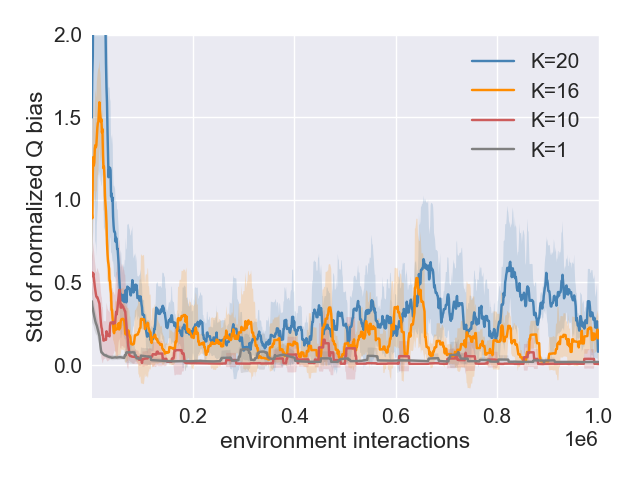}
	\caption{Std of normalized bias}
	\label{fig:hopper-abl-k-std}
\end{subfigure}

\begin{subfigure}{0.3\textwidth}
	\centering
	\includegraphics[width=0.99\linewidth]{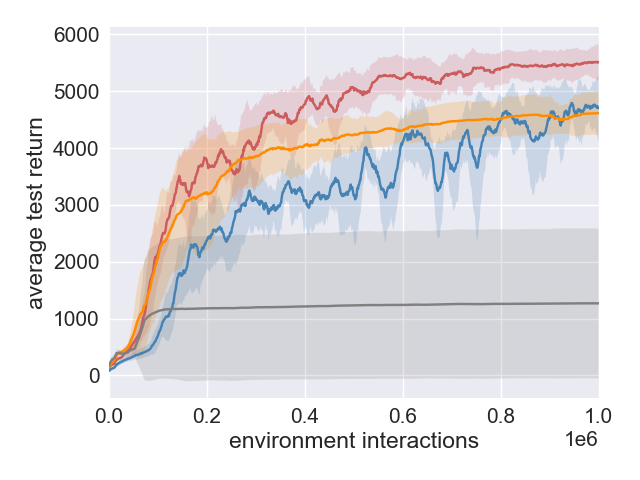}
	\caption{Performance, Walker}
	\label{fig:walker-abl-k}
\end{subfigure}
\begin{subfigure}{0.3\textwidth}
	\centering
	\includegraphics[width=0.99\linewidth]{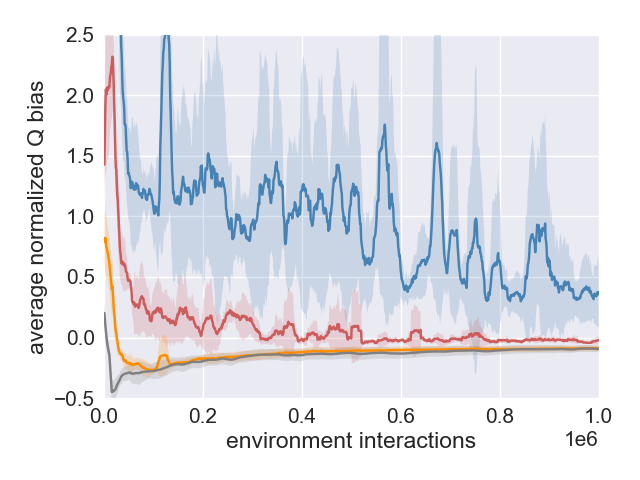}
	\caption{Average normalized bias}
    \label{fig:walker-abl-k-qbias}
\end{subfigure}
\begin{subfigure}{0.3\textwidth}
	\centering
	\includegraphics[width=0.99\linewidth]{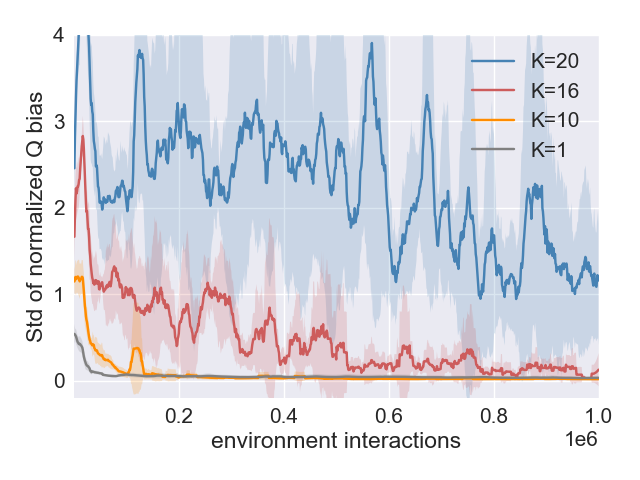}
	\caption{Std of normalized bias}
	\label{fig:walker-abl-k-std}
\end{subfigure}

\begin{subfigure}{0.3\textwidth}
	\centering
	\includegraphics[width=0.99\linewidth]{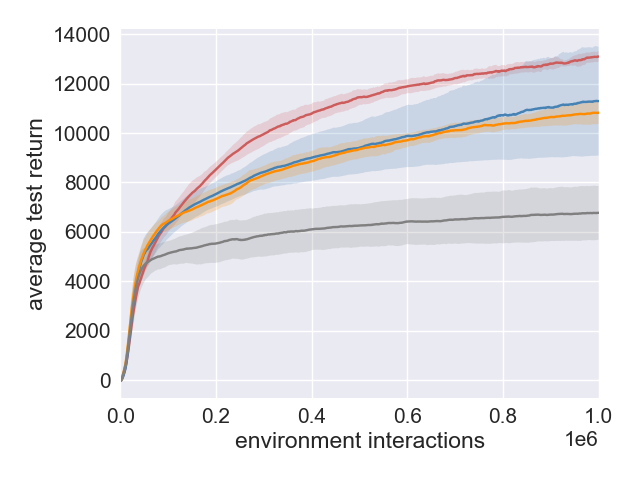}
	\caption{Performance, HalfCheetah}
	\label{fig:halfcheetah-abl-k}
\end{subfigure}
\begin{subfigure}{0.3\textwidth}
	\centering
	\includegraphics[width=0.99\linewidth]{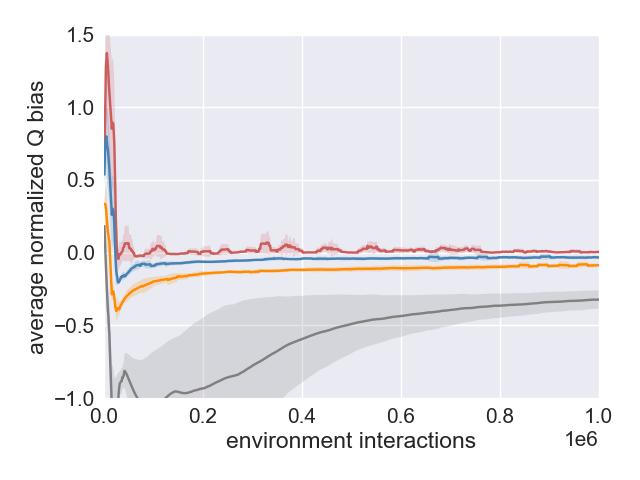}
	\caption{Average normalized bias}
    \label{fig:halfcheetah-abl-k-qbias}
\end{subfigure}
\begin{subfigure}{0.3\textwidth}
	\centering
	\includegraphics[width=0.99\linewidth]{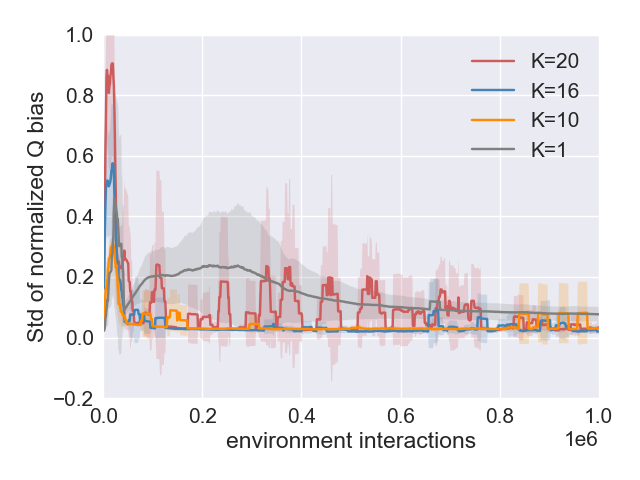}
	\caption{Std of normalized bias}
	\label{fig:halfcheetah-abl-k-std}
\end{subfigure}

\begin{subfigure}{0.3\textwidth}
	\centering
	\includegraphics[width=0.99\linewidth]{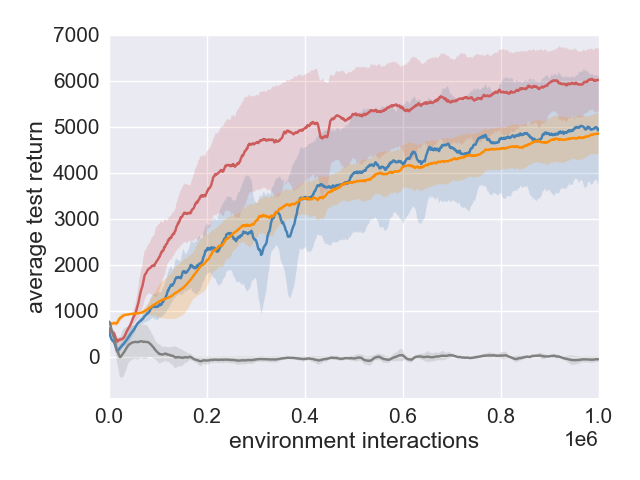}
	\caption{Performance, Ant}
	\label{fig:ant-abl-k}
\end{subfigure}
\begin{subfigure}{0.3\textwidth}
	\centering
	\includegraphics[width=0.99\linewidth]{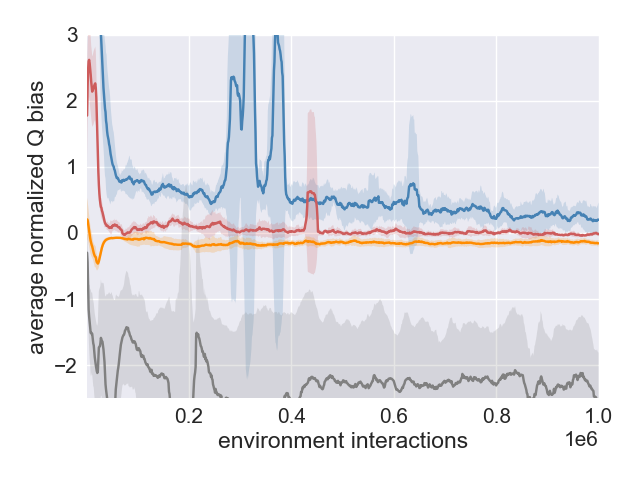}
	\caption{Average normalized bias}
    \label{fig:ant-abl-k-qbias}
\end{subfigure}
\begin{subfigure}{0.3\textwidth}
	\centering
	\includegraphics[width=0.99\linewidth]{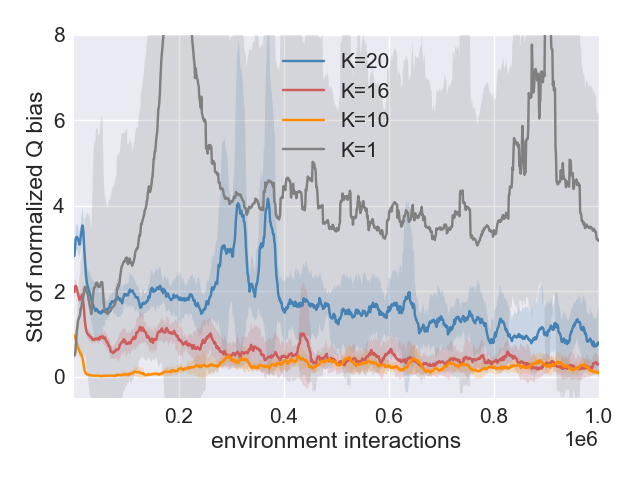}
	\caption{Std of normalized bias}
	\label{fig:ant-abl-k-std}
\end{subfigure}
\begin{subfigure}{0.3\textwidth}
	\centering
	\includegraphics[width=0.99\linewidth]{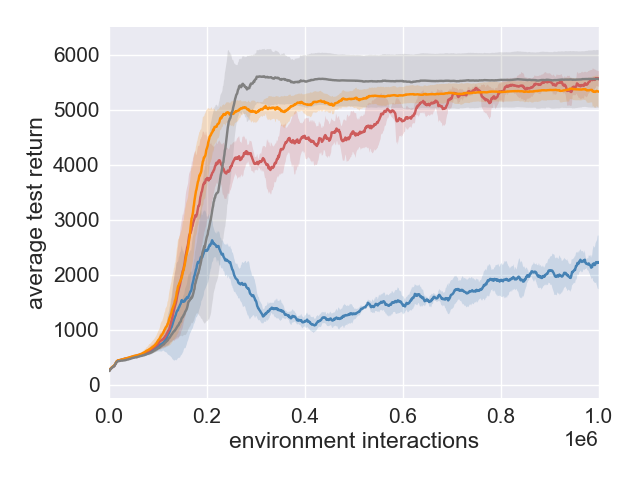}
	\caption{Performance, Humanoid}
	\label{fig:humanoid-abl-k}
\end{subfigure}
\begin{subfigure}{0.3\textwidth}
	\centering
	\includegraphics[width=0.99\linewidth]{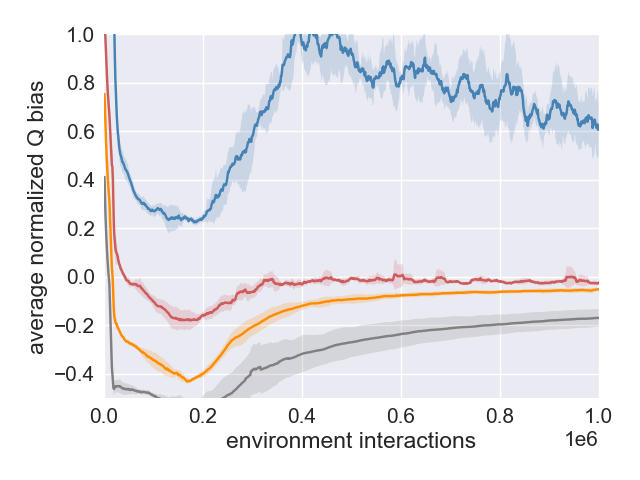}
	\caption{Average normalized bias}
    \label{fig:humanoid-abl-k-qbias}
\end{subfigure}
\begin{subfigure}{0.3\textwidth}
	\centering
	\includegraphics[width=0.99\linewidth]{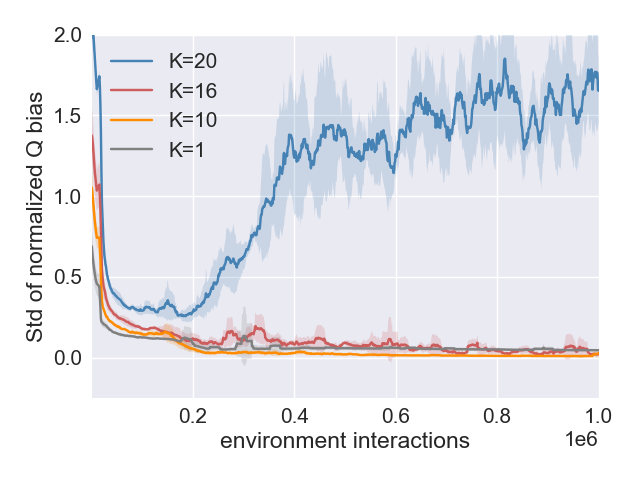}
	\caption{Std of normalized bias}
	\label{fig:humanoid-abl-k-std}
\end{subfigure}
\caption{Performance, average and std of normalized Q bias for AQE with different values of $K$.}
\label{fig:ablation-k-appendix}
\end{figure}

\clearpage

\section{Additional Results for Multi-head Architecture}
\label{ap:multihead}
Due to lack of space, Figure \ref{fig:ablation} only compares the different size of the ensemble $N$ and the number of heads $h$ for Ant. Figure \ref{fig:full-Nhead} shows the results for all five environments. We can see that the combination of $N=10, h=2$ and $N=20, h=1$ have comparable performance. However, $N=10$ and $h=2$ is faster in terms of computation time. 

\begin{figure}[h!tb]
\centering
\begin{subfigure}{0.329\textwidth}
	\centering
	\includegraphics[width=0.99\linewidth]{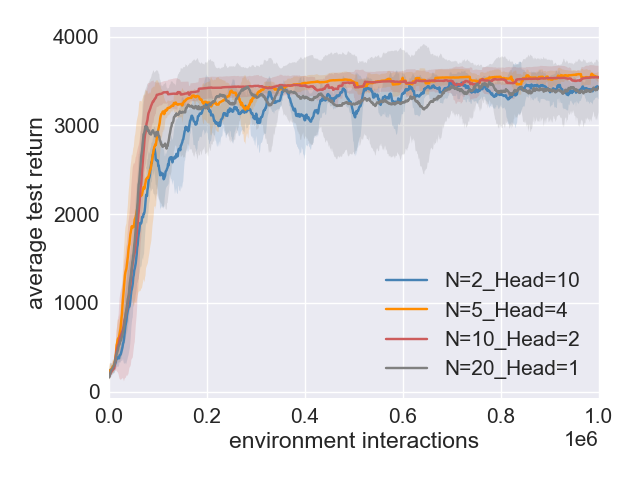}
	\caption{Hopper-v2}
	\label{fig:Nhead-hopper}
\end{subfigure}
\begin{subfigure}{0.329\textwidth}
	\centering
	\includegraphics[width=0.99\linewidth]{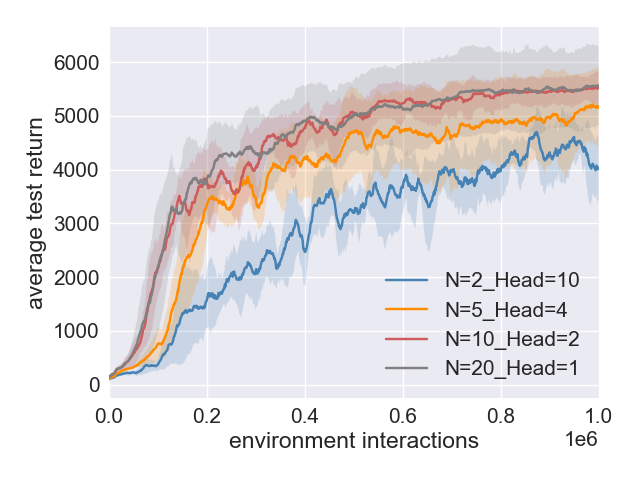}
	\caption{Walker2d-v2}
    \label{fig:Nhead-walker2d}
\end{subfigure}
\begin{subfigure}{0.329\textwidth}
	\centering
	\includegraphics[width=0.99\linewidth]{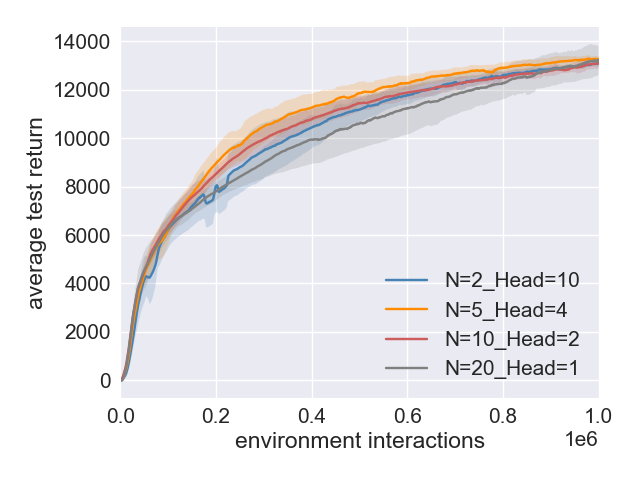}
	\caption{HalfCheetah-v2}
	\label{fig:Nhead-halfcheetah}
\end{subfigure}
\begin{subfigure}{0.329\textwidth}
	\centering
	\includegraphics[width=0.99\linewidth]{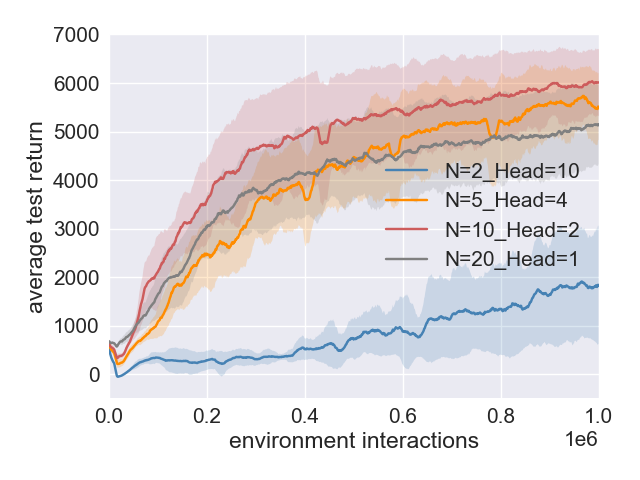}
	\caption{Ant-v2}
	\label{fig:Nhead-ant}
\end{subfigure}
\begin{subfigure}{0.329\textwidth}
	\centering
	\includegraphics[width=0.99\linewidth]{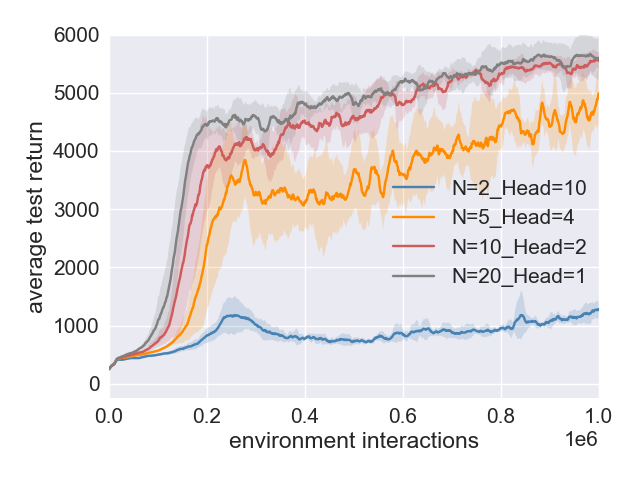}
	\caption{Humanoid-v2}
	\label{fig:Nhead-humanoid}
\end{subfigure}
\caption{Performance for AQE with different combinations of number of Q networks and number of heads.}
\label{fig:full-Nhead}
\end{figure}

Will the performance of REDQ match that of AQE if we also provide REDQ a multi-head architecture? Figure \ref{fig:redq-head} examines the performance of REDQ when it is endowed with the same multi-head architecture as AQE. We see that the performance of REDQ does not substantially improve. 

\begin{figure}[h!tb]
\centering
\begin{subfigure}{0.329\textwidth}
	\centering
	\includegraphics[width=0.99\linewidth]{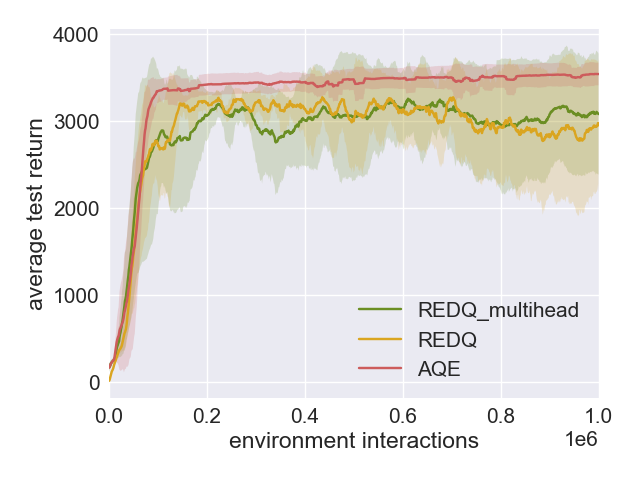}
	\caption{Hopper-v2}
	\label{fig:redq-head-hopper}
\end{subfigure}
\begin{subfigure}{0.329\textwidth}
	\centering
	\includegraphics[width=0.99\linewidth]{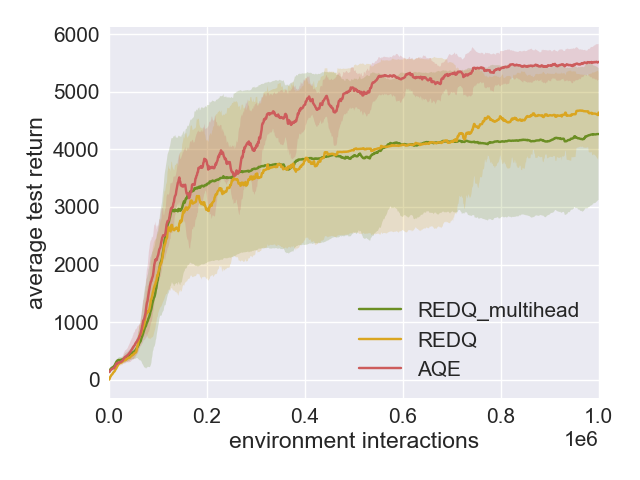}
	\caption{Walker2d-v2}
    \label{fig:redq-head-walker2d}
\end{subfigure}
\begin{subfigure}{0.329\textwidth}
	\centering
	\includegraphics[width=0.99\linewidth]{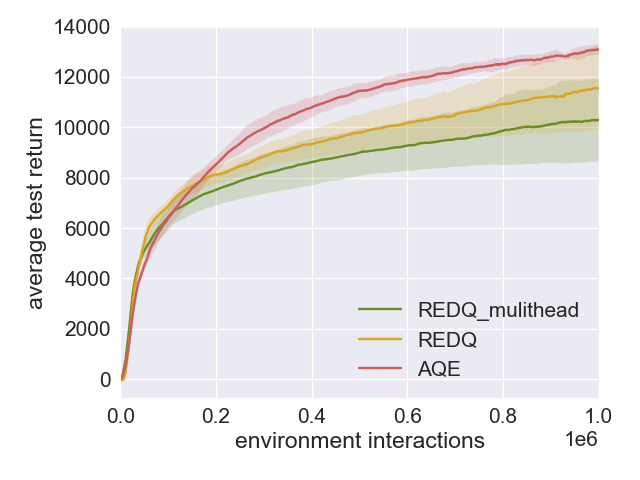}
	\caption{HalfCheetah-v2}
	\label{fig:redq-head-halfcheetah}
\end{subfigure}
\begin{subfigure}{0.329\textwidth}
	\centering
	\includegraphics[width=0.99\linewidth]{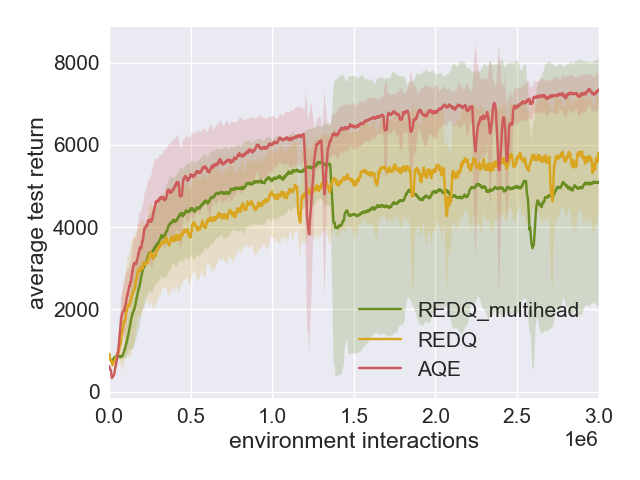}
	\caption{Ant-v2}
	\label{fig:redq-head-ant}
\end{subfigure}
\begin{subfigure}{0.329\textwidth}
	\centering
	\includegraphics[width=0.99\linewidth]{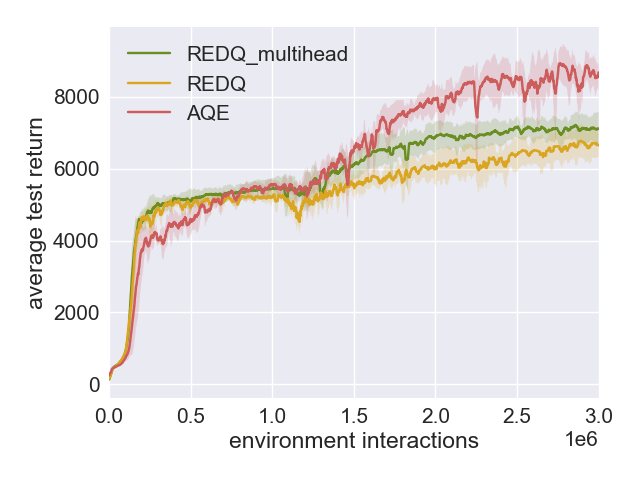}
	\caption{Humanoid-v2}
	\label{fig:redq-head-humanoid}
\end{subfigure}
\caption{Performance of REDQ with N=10 and heads = 2 as compared with REDQ and AQE.}
\label{fig:redq-head}
\end{figure}

\clearpage

\section{Theoretical Results}
In this section, we characterize how changing the size of the ensemble $N$ and the keep parameter $K$ affects the estimation bias term in the AQE algorithm. We will restrict our analysis to the tabular version of AQE shown in algorithm \ref{alg:tabular_aqe}.

Our analysis will follow similar lines of reasoning as \citet{maxmin-LanPFW20} and \citet{redq-chen2021randomized} which extends upon the theoretical framework introduced in \citet{thrun1993issues}.

For each $a\in\cA$, let $E_{K,N}(s,a)$ be the ensemble members in $\{1,\ldots,N\}$ with the $K$ lowest values of $Q^{j}(s, a)$, $j=1,\ldots,N$.
In the tabular case, the target for the Q networks take the form:
\begin{equation}\label{eq:q_targ}
    r + \gamma\max_{a'}\left(\frac{1}{K}\sum_{j \in E_{K,N}(s',a')} Q^{j}(s',a')\right).
\end{equation}

Define the \emph{post-update estimation bias} as
\begin{equation}
\begin{aligned}
    Z_{K, N} &:= r + \gamma\max_{a'}\left(\frac{1}{K}\sum_{j \in E_{K,N}(s',a')} Q^{j}(s',a')\right) - \left(r + \gamma\max_{a'}Q^{\pi}(s',a') \right) \\
    &= \gamma\left[\max_{a'} \left(\frac{1}{K}\sum_{j \in E_{K,N}(s',a')} Q^{j}(s',a')\right) -   \max_{a'}Q^{\pi}(s',a')\right]
\end{aligned}
\end{equation}
Under this definition, if $\E[Z_{K,N}] > 0$, then the expected post-update estimation bias is positive and there is a tendency for the positive bias to accumulate during updates. 
Similarly, if $\E[Z_{K,N}] < 0$, then the expected post-update estimation bias is negative and there is a tendency for the negative bias to accumulate during updates. Ideally, we would like 
$\E[Z_{K,N}] \approx 0$

Also let
\begin{equation}
    Q^j(s,a) = Q^{\pi}(s,a) + e^j(s,a)
\end{equation}
where $e^j(s,a)$ is an independent and identically distributed error term across all $j$'s and all $a$'s for each fixed $s$. We further assume that $\E[e^j(s,a)]=0$. Note that with this assumption
\begin{displaymath}
    \E \left[\frac{1}{N}\sum_{j=1}^N  Q^{j}(s,a)\right]- Q^{\pi}(s,a)  =0,
\end{displaymath}
that is the pre-update estimation bias is zero. The following theorem shows how the expected estimation bias changes with $N$ and $K$:

\begin{restatable}{theorem}{tabularthm}
\label{thm:tabular_thm}
The following results hold for $\E[Z_{K,N}]$:
\begin{enumerate}
    \item $\E[Z_{N,N}] \geq 0$ for all $N\geq 1$.
    \item $\E[Z_{K-1,N}] \leq \E[Z_{K,N}]$ for all $K \leq N$.
    \item $\E[Z_{K, N+1}] \leq \E[Z_{K, N}]$.
    \item Suppose that $e_{sa}^j \leq c$ for some $c > 0$ for all $s, a$ and $j$. Then there exists an $N$ sufficiently large and $K<N$ such that $\E[Z_{K,N}] < 0$.
\end{enumerate}
\end{restatable}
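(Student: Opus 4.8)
The plan is to handle the four items in turn. Items (1)--(3) come from Jensen's inequality together with two elementary pointwise inequalities about the $K$ smallest entries of a finite list of reals; item (4) is the delicate one. For (1): when $K=N$ the kept average equals $\frac1N\sum_{j=1}^N Q^j(s',a')=Q^\pi(s',a')+\frac1N\sum_{j=1}^N e^j(s',a')$, whose expectation is $Q^\pi(s',a')$ for every $a'$; since $\E\big[\max_{a'}X_{a'}\big]\ge\max_{a'}\E[X_{a'}]$ for any finite family of random variables, taking $X_{a'}=\frac1N\sum_j Q^j(s',a')$ gives $\E\big[\max_{a'}\tfrac1N\sum_j Q^j(s',a')\big]\ge\max_{a'}Q^\pi(s',a')$, i.e.\ $\E[Z_{N,N}]\ge0$; this is the classical overestimation phenomenon of \citet{thrun1993issues}.

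For (2) and (3) I would first record two deterministic lemmas. (i) If $v_{(1)}\le\cdots\le v_{(N)}$ and $2\le K\le N$, then $\frac1{K-1}\sum_{i=1}^{K-1}v_{(i)}\le v_{(K)}$, and since $\frac1K\sum_{i=1}^K v_{(i)}$ is a convex combination of $\frac1{K-1}\sum_{i=1}^{K-1}v_{(i)}$ and $v_{(K)}$, it follows that $\frac1{K-1}\sum_{i=1}^{K-1}v_{(i)}\le\frac1K\sum_{i=1}^K v_{(i)}$. (ii) Inserting one extra number into a sorted list of length $N$ cannot increase any of the first $N$ order statistics, so the sum of the $K$ smallest cannot increase for any $K\le N$. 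Applying (i) with $v_j=Q^j(s',a')$ pointwise in $a'$, then taking $\max_{a'}$ (monotone) and $\E$, yields $\E[Z_{K-1,N}]\le\E[Z_{K,N}]$. For (3), couple the $N$-ensemble and the $(N{+}1)$-ensemble so that they share the first $N$ error terms $e^1(s',\cdot),\dots,e^N(s',\cdot)$ (legitimate since the $e^j$ are i.i.d.); then (ii) gives, pointwise in $\omega$ and $a'$, $\frac1K\sum_{j\in E_{K,N+1}(s',a')}Q^j(s',a')\le\frac1K\sum_{j\in E_{K,N}(s',a')}Q^j(s',a')$, and taking $\max_{a'}$ and $\E$ gives $\E[Z_{K,N+1}]\le\E[Z_{K,N}]$.

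For (4) I would take $K=1$, so that the kept average is $\min_{j\le N}Q^j(s',a')=Q^\pi(s',a')+\min_{j\le N}e^j(s',a')$; using $\max_{a'}(f(a')+g(a'))\le\max_{a'}f(a')+\max_{a'}g(a')$ this gives $Z_{1,N}\le\gamma\,Y_N$ with $Y_N:=\max_{a'}\min_{j\le N}e^j(s',a')$, and also $Y_N\le c$. So it suffices to make $\E[Y_N]<0$. Because the $e^j$ have mean zero and are not a.s.\ zero (the non-degeneracy that is tacit in the statement), $\Pr(e^j<0)>0$, hence there is $\delta>0$ with $p:=\Pr(e^j<-\delta)>0$. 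For each fixed $a'$, $\Pr\big(\min_{j\le N}e^j(s',a')\ge-\delta\big)=(1-p)^N$, so a union bound over the finitely many actions gives $\Pr(Y_N\ge-\delta)\le|\cA|(1-p)^N$, whence $\E[Y_N]\le-\delta\,\Pr(Y_N<-\delta)+c\,\Pr(Y_N\ge-\delta)\le-\delta+(c+\delta)\,|\cA|(1-p)^N$, which is strictly negative once $N$ is large enough. Fixing such an $N$ (with $K=1<N$) proves the claim. Equivalently, $Y_N\downarrow\max_{a'}\operatorname{essinf}(e^j)$ a.s.\ with $Y_N\le c$, so monotone convergence gives $\E[Z_{1,N}]\to\gamma\operatorname{essinf}(e^j)<0$.

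The main obstacle is item (4): one must notice that mean-zero errors must take strictly negative values (so the statement really needs the suppressed non-degeneracy assumption --- if $e^j\equiv0$ then $Z_{K,N}\equiv0$ and no $N,K$ works), and then drive the in-ensemble minimum, which sits \emph{inside} a $\max$ over actions, down to the essential infimum of the error distribution as $N\to\infty$ while controlling the expectation using only the one-sided bound $e^j\le c$; keeping the order of the $\max_{a'}$ and $\min_j$ operations straight is where care is needed. Items (1)--(3) are essentially bookkeeping once lemmas (i) and (ii) are in place.
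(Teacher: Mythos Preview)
Your proof is correct. Parts (1)--(3) follow essentially the same route as the paper: Jensen's inequality for (1), the pointwise monotonicity of the average of the $K$ smallest order statistics in $K$ for (2), and a coupling argument (add one more i.i.d.\ sample) for (3). Your presentation is slightly more explicit than the paper's, which simply asserts the pointwise inequality $\max_a\bar Q_{K+1,N}\ge\max_a\bar Q_{K,N}$ without deriving it.

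For (4) you take a genuinely different and more elementary path. The paper works directly with $\max_{a'}\min_{j}Q^j(s',a')$ and invokes a lemma from \citet{redq-chen2021randomized} stating that $\min_{j\le N}X_j\to\operatorname{essinf} X$ almost surely; it then uses the monotonicity from (3) together with the one-sided bound $e^j\le c$ to pass to the limit in expectation and conclude $\E[Z_{1,N}]\to\gamma(\max_a\tau_a-\max_aQ^\pi)<0$. You instead isolate the error terms via the subadditivity bound $Z_{1,N}\le\gamma\,\max_{a'}\min_{j}e^j(s',a')=:\gamma Y_N$, and then bound $\E[Y_N]$ directly with an explicit tail estimate and a union bound over the finitely many actions. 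What your approach buys is an explicit, quantitative bound $\E[Y_N]\le-\delta+(c+\delta)|\cA|(1-p)^N$ (so one can read off how large $N$ must be), and it avoids importing the auxiliary almost-sure convergence lemma. What the paper's approach buys is an exact identification of the limit $\lim_N\E[Z_{1,N}]$ rather than just an upper bound on it. Both require the same tacit non-degeneracy of the error distribution, which you correctly flag.
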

\begin{proofsketch}
Part 1 is a result of Jensen's Inequality, and Parts 2 and 3 can be shown by analyzing how the average of the $K$ smallest ensembles changes when adding an extra ensemble model. Given the first three parts, we only need to show that $\E[Z_{1, N}] < 0$ to show that there exists a $K$ for a sufficiently large $N$ where the expected bias is negative. See the next section for full proof.
\end{proofsketch}

Theorem 1 shows that we can control the expected post-update bias $\E[Z_{K,N}]$ through adjusting $K$ and $N$. More concretely, we can bring the bias term from above zero (i.e. over estimation) to under zero (i.e. under estimation) by decreasing $K$ and/or increasing $N$.

We note also that similar to \citet{redq-chen2021randomized}, we make very few assumptions on the error term $e_{s,a}$. This is in contrary to \citet{thrun1993issues} and \citet{maxmin-LanPFW20}, both of whom assume that the error term is uniformly distributed. 


\subsection{Tabular AQE with $N$ ensemble members and $d$ drops}
\label{append:tabular}

\begin{algorithm}[ht]
\caption{Tabular AQE}
\label{alg:tabular_aqe}
\begin{algorithmic}[1]
\INPUT $Q^j(s,a)$ for all $s\in\cS$, $a\in\cA$, $j=1,\dots,N$.
\Repeat
\State For some state $s\in\cS$, choose $a\in \mathcal{A}$ based on $\big\{Q^j(s,a)\big\}_{j=1}^N$, observe $r$, $s'$.
\State For each $a'\in\cA$, let $E_{K, N}(s',a')$ be the ensemble members in $\{1,\ldots,N\}$ with the $K$ lowest values of $Q^{j}(s', a')$, $j=1,\ldots,N$.
\State Get target
\[
y=r + \gamma\max_{a'\in \mathcal{A}}\frac{1}{K}\sum_{j \in E_{K, N}(s',a') } Q^{j}(s', a')
\]
\For{$j=1,\ldots,N$}
    \State Update each $Q^j(s,a)$
    \[
    Q^j(s, a) \gets Q^j(s, a) + \alpha(y - Q^j(s, a))
    \]
\EndFor
\State $s \gets s'$
\Until{end}
\end{algorithmic}
\end{algorithm}

\clearpage
\section{Proofs}
We first present the following lemma:

\begin{lemma}[\citealp{redq-chen2021randomized}]\label{lemma:min}
Let $X_1, X_2, \dots$ be an infinite sequence of $i.i.d.$ random variables with cdf $F(x)$ and let $\tau = \inf{x: F(x) > 0}$. Also let $Y_N = \min\{X_1, X_2, \dots, X_N\}$. Then $Y_1, Y_2,\dots$ converges to $\tau$ almost surely.
\end{lemma}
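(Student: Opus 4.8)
The plan is to exploit the monotonicity of $Y_N$ in $N$ together with two matching one-sided bounds on its limit. First I would observe that $Y_N = \min\{X_1,\dots,X_N\}$ is non-increasing in $N$, since adjoining one more variable can only lower the minimum. A monotone non-increasing sequence always converges in the extended reals, so $Y_N \downarrow Y_\infty := \inf_N Y_N$ almost surely, where a priori $Y_\infty \in [-\infty,\infty)$. It then suffices to show $Y_\infty = \tau$ almost surely, which I would establish by proving $Y_\infty \ge \tau$ and $Y_\infty \le \tau$ separately.

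For the lower bound, I would use the definition $\tau = \inf\{x : F(x) > 0\}$ to note that $F(x) = 0$ for every $x < \tau$, whence $\Pr(X_1 < \tau) = \lim_{x \uparrow \tau} F(x) = 0$. Thus each $X_j \ge \tau$ almost surely, so $Y_N \ge \tau$ for all $N$, and therefore $Y_\infty \ge \tau$ almost surely.

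For the upper bound, fix $\epsilon > 0$. Because $\tau$ is the infimum of $\{x : F(x) > 0\}$ and $F$ is non-decreasing, $\tau + \epsilon$ strictly exceeds $\tau$, and so $p_\epsilon := F(\tau + \epsilon) > 0$. Since the $X_j$ are i.i.d.,
\[
\Pr(Y_N > \tau + \epsilon) = \prod_{j=1}^N \Pr(X_j > \tau + \epsilon) = (1 - p_\epsilon)^N,
\]
which tends to $0$ as $N \to \infty$. By the monotonicity established above, $\{Y_\infty > \tau + \epsilon\} \subseteq \{Y_N > \tau + \epsilon\}$ for every $N$, so $\Pr(Y_\infty > \tau + \epsilon) \le (1 - p_\epsilon)^N$ for all $N$, giving $\Pr(Y_\infty > \tau + \epsilon) = 0$. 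Taking a union over $\epsilon = 1/k$, $k = 1, 2, \dots$, yields $\Pr(Y_\infty > \tau) = 0$, i.e. $Y_\infty \le \tau$ almost surely. Combining the two bounds gives $Y_\infty = \tau$ almost surely, as required.

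The argument is short, and the only real care needed lies in the handling of $\tau$: I must verify both that $F$ vanishes strictly below $\tau$ (for the lower bound) and that $F$ is strictly positive just above $\tau$ (for the upper bound), each of which follows directly from $\tau$ being the infimum of a set defined through the non-decreasing function $F$. The one genuine edge case is $\tau = -\infty$, where the support is unbounded below; there the same computation with the threshold $\tau + \epsilon$ replaced by an arbitrary real $-M$ shows $Y_\infty \le -M$ almost surely for every $M$, forcing $Y_\infty = -\infty = \tau$. I would remark on this case briefly, as it requires no new idea.
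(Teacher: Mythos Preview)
Your proof is correct and self-contained. The paper does not actually prove this lemma: its entire proof reads ``See Appendix~A.2 of \citet{redq-chen2021randomized},'' so there is no in-paper argument to compare against. Your monotonicity-plus-two-sided-bound argument is the standard route and is exactly the kind of proof one would expect the cited reference to contain; the handling of the $\tau=-\infty$ case is a nice touch that makes the statement fully general.
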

\begin{proof}
See Appendix A.2 of \citet{redq-chen2021randomized}
\end{proof}

\tabularthm*

\begin{proof}
\begin{enumerate}
    \item By definition,
    \begin{equation}
        \begin{aligned}
            \E[Z_{N,N}] &= \gamma \E\left[\max_{a'} \left(\frac{1}{N}\sum_{j=1}^{N} Q^j(s',a')\right) - \max_{a'}Q^{\pi}(s',a')\right] \\
            &\geq \gamma\left[\max_{a'}E\left[ \left(\frac{1}{N}\sum_{j=1}^{N} Q^j(s',a')\right) \right]- \max_{a'}Q^{\pi}(s',a')\right] \\
            &= \gamma\left[\max_{a'}Q^{\pi}(s',a') - \max_{a'}Q^{\pi}(s',a')\right] = 0
        \end{aligned}
    \end{equation}
\item Let
\begin{equation}
    \Bar{Q}_{K,N}(s,a) = \frac{1}{K}\sum_{j\in E_{K,N}}Q^j(s,a).
\end{equation}
Since for any state $s$, $\max_a \Bar{Q}_{K+1,N}(s,a)\geq \max_a \Bar{Q}_{K,N}(s,a)$,
\begin{equation}
    \begin{aligned}
       \E[Z_{K+1,N}] &= \gamma\E\left[\max_{a'} \Bar{Q}_{K+1,N}(s',a') - \max_{a'}Q^{\pi}(s',a')\right] \\
       &\geq \gamma\E\left[\max_{a'} \Bar{Q}_{K,N}(s',a') - \max_{a'}Q^{\pi}(s',a')\right] \\
       &= \E[Z_{K,N}]
    \end{aligned}
\end{equation}
\item 
Comparing $\E[Z_{K,N}]$ and $\E[Z_{K,N+1}]$ is equivalent to comparing $\Bar{Q}_{K,N}(s,a)$ and $\Bar{Q}_{K,N+1}(s,a)$. Since $e^j(s,a)$ is i.i.d., by extension $Q^j(s,a)$ is also i.i.d. for $j=1,2,\cdots$. Suppose $Q^j(s,a)$ is drawn from some probability distribution $F$, then given $\Bar{Q}_{K,N}(s,a)$, $\Bar{Q}_{K,N+1}(s,a)$ can be calculated by generating an additional $Q^i(s,a)$ from $F$. The new sample $Q^i(s,a)$ affects the calculation of $\Bar{Q}_{K,N+1}(s,a)$ under the following two cases:
\begin{itemize}
    \item If $Q^i(s,a) > \max_{j\in E_{K,N}}Q^j(s,a)$, then the lowest $K$ values remain unchanged hence $\Bar{Q}_{K,N}(s,a)=\Bar{Q}_{K,N+1}(s,a)$.
    \item Else if $Q^i(s,a) \leq \max_{j\in E_{K,N}}Q^j(s,a)$, then $\max_{j\in E_{K,N}}Q^j(s,a)$ would be removed from and $Q^i(s,a)$ would be added to the set of lowest $K$ values, therefore $\Bar{Q}_{K,N+1}(s,a)\leq \Bar{Q}_{K,N}(s,a)$.
\end{itemize}
Combining the two cases $\Bar{Q}_{K,N+1}(s,a)\leq \Bar{Q}_{K,N}(s,a)$, therefore $\E[Z_{K,N+1}]\leq \E[Z_{K,N}]$

\item Since $\E[Z_{N, N}] \geq 0$, $\E[Z_{K, N}] \leq \E[Z_{K+1, N}]$ and $\E[Z_{K, N+1}] \leq \E[Z_{K, N}]$. It is suffice to show that $\E[Z_{1, N}] < 0$ for some $N$. The rest of the proof largely follows Theorem 1 of \citet{redq-chen2021randomized}. 

Let $\tau=\inf\{x: F_a(x) > 0\}$ where $F_a(x)$ is the cdf of $Q^j(s,a)$, $j=1,2,\dots$. By Lemma 1, $\Bar{Q}_{1,N}(s,a) = \min_{1\leq j\leq N} Q^j(s,a)$ converges almost surely to to $\tau_a$ for each $a$. Since the action space is finite, it then follows that $\max_a \Bar{Q}_{1,N}(s,a)$ converges almost surely to to $\tau=\max_a \tau_a$. Due to our assumption that $e^j(s,a)\leq c$ and that $Q^{\pi}(s,a)$ is finite, it then follows that $\max_a \Bar{Q}_{1,N}(s,a)$ is also bounded above. By Part 3 of the theorem, $\Bar{Q}_{1,N}(s,a)$ is monotonoically decreasing w.r.t. $N$. and since $\max_a \Bar{Q}_{1,N}(s,a)$ is also bounded above and converges almost surely to $\tau$, we have
\begin{equation}
    \begin{aligned}
       \E[Z_{1,N}] &= \gamma\left(\E[\max_a \min_{1\leq j\leq N}Q^j(s,a)] - \max_a Q^{\pi}(s,a) \right) \\
       &= \gamma \left(\E[\max_a Y_a^N] - \max_a Q^{\pi}(s,a)\right)\overset{N\to\infty}{\longrightarrow} \gamma\left(\max_a \tau_a - \max_a Q^{\pi}(s,a)\right) < 0
    \end{aligned}
\end{equation}
where the last equality follows from the assumption that the error $e^j(s,a)$ is non-trivial, and hence $\tau_a < \max_a Q^{\pi}(s,a)$ for all $a$. Therefore for a sufficiently large $N$, there exists a $1\leq K \leq N$ such that $\E_{K, N} < 0$.

\end{enumerate}

\end{proof}

\section{Computing Infrastructure}
Each experiment is run on a single Nvidia 2080-Ti GPU with CentOS Linux System.

\end{document}